\documentclass{article}

\usepackage{microtype}
\usepackage{graphicx}
\usepackage{subcaption}
\usepackage{booktabs} 
\usepackage[dvipsnames]{xcolor}

\usepackage{hyperref}
\usepackage{natbib}

\newcommand{\bx}{{\boldsymbol{x}}}

\newcommand{\bv}{{\boldsymbol{v}}}
\newcommand{\bgamma}{{\boldsymbol{\gamma}}}
\newcommand{\balpha}{{\boldsymbol{\alpha}}}
\newcommand{\btheta}{{\boldsymbol{\theta}}}
\newcommand{\bSigma}{{\boldsymbol{\Sigma}}}
\newcommand{\bmu}{{\boldsymbol{\mu}}}

\newcommand{\rn}{\mathbb{R}}

\usepackage{tikz}
\usetikzlibrary{shapes.geometric}
\usetikzlibrary {shapes.arrows}

\DeclareRobustCommand{\reddot}{\tikz[baseline=-0.6ex]\node[draw=black,fill=red,circle,inner sep=2pt]{};}
\DeclareRobustCommand{\purplestar}{\tikz[baseline=-0.6ex]\node[draw=black,fill=Orchid,star,inner sep=1.5pt]{};}

\DeclareRobustCommand{\greendot}{\tikz[baseline=-0.6ex]\node[draw=black,fill=darkgreen,circle,inner sep=2pt]{};}

\DeclareRobustCommand{\bluedot}{\tikz[baseline=-0.6ex]\node[draw=black,fill=Cerulean,circle,inner sep=2pt]{};}

\DeclareRobustCommand{\kleinbluedot}{\tikz[baseline=-0.6ex]\node[draw=blue,fill=blue,circle,inner sep=2pt]{};}

\DeclareRobustCommand{\purpledot}{\tikz[baseline=-0.6ex]\node[draw=Orchid,fill=Orchid,circle,inner sep=2pt]{};}

\DeclareRobustCommand{\orangedot}{\tikz[baseline=-0.6ex]\node[draw=black,fill=orange,circle,inner sep=2pt]{};}

\DeclareRobustCommand{\turquoisedot}{\tikz[baseline=-0.6ex]\node[draw=black,fill=Turquoise,circle,inner sep=2pt]{};}
\DeclareRobustCommand{\periwinkledot}{\tikz[baseline=-0.6ex]\node[draw=black,fill=Periwinkle,circle,inner sep=2pt]{};}
\DeclareRobustCommand{\purpleline}{%
  \tikz[baseline=-0.6ex]\draw[violet, line width=1.5pt] (0,0)--(0.3,0);%
}

\DeclareRobustCommand{\redline}{%
  \tikz[baseline=-0.6ex]\draw[red, line width=1.5pt] (0,0)--(0.3,0);%
}
\DeclareRobustCommand{\greenline}{%
  \tikz[baseline=-0.6ex]\draw[darkgreen, line width=1.5pt] (0,0)--(0.3,0);%
}

\DeclareRobustCommand{\orangeline}{%
  \tikz[baseline=-0.6ex]\draw[orange, line width=1.5pt] (0,0)--(0.3,0);%
}

\DeclareRobustCommand{\grayline}{%
  \tikz[baseline=-0.6ex]\draw[gray, line width=1.5pt] (0,0)--(0.3,0);%
}

\usepackage{subcaption}
\usepackage{wrapfig}
\usepackage{bbm}
\usepackage{graphicx}
\usepackage[utf8]{inputenc} 
\usepackage[T1]{fontenc}    
\usepackage{hyperref}       
\usepackage{url}            
\usepackage{booktabs}       
\usepackage{amsfonts}       
\usepackage{nicefrac}       
\usepackage{microtype}      
\usepackage{xcolor}         
\usepackage{amsthm}
\usepackage{amsmath}

\definecolor{darkgreen}{rgb}{0,0.5,0}

\usepackage[preprint]{icml2026}

\usepackage{amsmath}
\usepackage{amssymb}
\usepackage{mathtools}
\usepackage{amsthm}

\usepackage[capitalize,noabbrev]{cleveref}

\theoremstyle{plain}
\newtheorem{theorem}{Theorem}[section]

\theoremstyle{definition}
\newtheorem{definition}[theorem]{Definition}

\theoremstyle{remark}

\usepackage[textsize=tiny]{todonotes}

\icmltitlerunning{Reducing Memorisation in Generative Models via Riemannian Bayesian Inference}

\begin{document}

\twocolumn[
  \icmltitle{Reducing Memorisation in Generative Models \\ via Riemannian Bayesian Inference}

  \icmlsetsymbol{equal}{*}

  \begin{icmlauthorlist}
    \icmlauthor{Johanna Marie Gegenfurtner}{equal,yyy}
    \icmlauthor{Albert Kj{\o}ller Jacobsen}{equal,yyy} 
    
    \icmlauthor{Naima Elosegui Borras}{yyy}
    \icmlauthor{Alejandro Valverde Mahou}{yyy}
    
    \icmlauthor{Georgios Arvanitidis}{yyy}

  \end{icmlauthorlist}

  \icmlaffiliation{yyy}{Department of Applied Mathematics and Computer Science, Technical University of Denmark}

  \icmlcorrespondingauthor{Johanna Marie Gegenfurtner}{johge@dtu.dk}
  \icmlcorrespondingauthor{Albert Kj{\o}ller Jacobsen}{akjja@dtu.dk}

  \icmlkeywords{Bayesian Methods, Differential Geometry}

  \vskip 0.3in
]

\printAffiliationsAndNotice{\icmlEqualContribution}

\begin{abstract}
Modern generative models can produce realistic samples, however, balancing memorisation and generalisation remains an open problem.
We approach this challenge from a Bayesian perspective by focusing on the parameter space of flow matching and diffusion models and constructing a predictive posterior that better captures the variability of the data distribution. In particular, we capture the geometry of the loss using a Riemannian metric and leverage a flexible approximate posterior that adapts to the local structure of the loss landscape. This approach allows us to sample generative models that resemble the original model, but exhibit reduced memorisation. Empirically, we demonstrate that the proposed approach reduces memorisation while preserving generalisation. Further, we provide a theoretical analysis of our method, which explains our findings.
Overall, our work illustrates how considering the geometry of the loss enables effective use of the parameter space, even for complex high-dimensional generative models.
\end{abstract}

\section{Introduction}

The success of modern generative models has raised questions about their capacity to merely memorise data or generate beyond it. While it is essential that a generative model captures the data distribution, it is critical in several applications to avoid overfitting to specific training examples. For predictive models, overfitting to the training data can boost performance as seen from the double descent phenomenon, however, there is no such thing as benign overfitting in the context of generative modelling, as perfectly replicating training data points raises concerns about privacy. Our work focuses on diffusion models, for which the problem of memorisation has been extensively discussed in recent work 
\cite{liu2024generative}. 
In this paper, we raise the question:

\textit{Can we reduce memorisation in modern generative models through uncertainty on their parameters?}

\begin{figure}
    \centering
    \includegraphics[width=\linewidth]{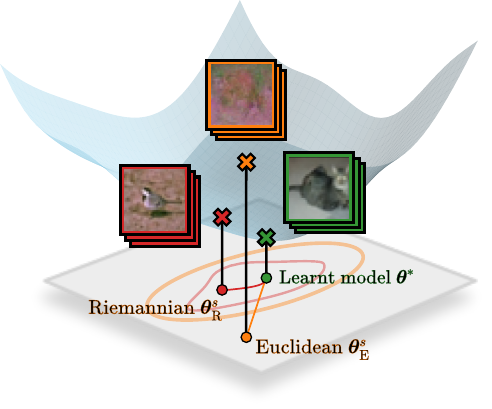}
    \caption{
    We propose reducing memorisation of a trained generative model by perturbing the learnt model parameters $\btheta^\ast$ (\greendot) by sampling from an approximate posterior distribution $\btheta \sim q\left(\btheta\right)$, thus constructing new models. While samples from a Euclidean approximate posterior (\orangedot) reduce memorisation, accounting for the geometry (\reddot) \textit{reduces memorisation without breaking the fit.}}
    \label{fig:figure1}
\end{figure}
To mitigate memorisation in modern generative models, we adopt a Bayesian treatment of the model parameters, using the Laplace approximation (LA) for defining a conceptually simple approximate posterior distribution. The Laplace approximation is defined as a Gaussian centred at the maximum a posteriori (MAP) estimate, however, it is often an overly crude approximation of the true posterior. To address this issue, we make use of the Riemannian Laplace approximation \cite{bergamin2023riemannian} that leverages the geometric structure of the true posterior. This allows us to perturb the parameters of the learnt model and construct an ensemble of generators, where each member reduces memorisation compared to the original model. As \citet{bertrand2025closed} have shown, generalisation happens exactly when the models fails to overfit to the training objective -- we achieve generalisation by breaking the overfitting behaviour.
Through experiments on flow matching and diffusion models, we demonstrate that introducing uncertainty on the model parameters while respecting the true posterior geometry is an effective way to reduce memorisation without forgetting how to generalise.

Specifically, our contributions are:
\begin{enumerate}
    \item We define a geometry-informed approximate posterior distribution over model parameters of diffusion-like generative models (Subsection \ref{subsec:RLA}).
    \item We extend a method for estimating generative uncertainty \citep{jazbec2025generative}. For deterministic generators, we define the exact likelihood of a generated sample (Subsection \ref{subsec:gen-u}).
    \item We provide mathematically grounded explanations on how our approach mitigates memorisation, and why a Riemannian perspective outperforms the standard Laplace approximation (Section \ref{sec:theory}).
    \item We provide empirical evidence that respecting the geometric structure can help generative models \textit{generalise rather than memorise}, as our posterior predictive is based on parameter samples from high-density regions of the true posterior (Section \ref{sec:experiments}). 
    
\end{enumerate}

\section{Background}

\subsection{Generative Modelling}

Assume we have $N$ data samples $\mathcal{D}_{\bx} =\{\bx_*^i\}_{i=1}^N\in\mathcal{X}\subseteq\mathbb{R}^D$ from an unknown distribution $p_*$. When training a generative model, the goal is to find a generator function $g_{\btheta}$ that maps samples from a known base distribution $\bx_0\sim p_0$ to new samples $\widehat{\bx}$ that approximately belong to $p_*$. Hence, the generative process can be summarised as:
\begin{equation}\label{eq:generator}
    \widehat{\bx} = g_{\btheta}\left(\bx_0\right), \qquad \bx_0 \sim p_0.
\end{equation}

\subsubsection{Flow Matching}
In flow matching, the generator's output is the endpoint of the solution of an initial value problem (IVP) of the form
\begin{equation}\label{eq:ODE-flow}
    \bx(0)=\bx_0, \qquad     \dot{\bx}(t)=u_\btheta(\bx_t,t),
\end{equation}
where $u_\btheta:\rn^D\times[0,1]\rightarrow\rn^D$ is a velocity field, represented by a neural network with parameters $\btheta$. We will denote a generated sample as $\widehat{\bx}=\bx(1).$ Usually, the base distribution is the unit Gaussian distribution $p_0= \mathcal{N}\left(\boldsymbol{0}, \mathbb{I}_D\right)$ and the IVP is solved with an Euler scheme. We denote the distribution of generated samples by $p\left(\widehat{\bx}\right).$

To learn the velocity field $u_\btheta,$ we optimise the loss function:
{\small
\begin{equation}
    \label{eq:loss}
    \mathcal{L}_{\operatorname{FM}}\left(\btheta, \mathcal{D}_{\bx}\right) = \mathbb{E}_{\substack{\bx_0 \sim p_0 \\\bx_\ast \sim \mathcal{D}_{\bx} \\ t\sim \mathcal{U}{\left[0,1\right]}}} \left[\lVert u_{\btheta}\left(\bx(t), t\right) - \left(\bx_{\ast} - \bx_{0}\right) \rVert^2_2\right].
\end{equation}}
For a Gaussian optimal transport path, a sample from the conditional probability path, $\bx_t \sim p_t\left(\cdot | \bx_\ast\right)$, is given by
\begin{equation}\label{eq:transport_path}
    \bx_t = t \bx_\ast + \left(1 - t\right) \bx_0.
\end{equation}
Differentiating with respect to time $t$ gives us the target function as in Equation \eqref{eq:loss}. For further reading, see the lecture notes by \citet{flowsanddiffusions2025}.

\subsubsection{Diffusion Models}
Under certain conditions, flow matching and denoising diffusion implicit models (DDIM) \citep{songdenoising} are equivalent at sampling time \citep{song2021maximum, gao2025diffusionmeetsflow}. A diffusion model defines a forward process that gradually transforms data into noise. This process is then reversed to generate samples. Given the extensive literature on diffusion models \cite{flowsanddiffusions2025, sohldickstein2015deepunsupervisedlearningusing}, we focus on DDIM and its relation to flow matching.

Let $p_t(\bx)$ denote the marginal distribution of a diffusion process at time $t \in [0,1]$, where $t=0$ corresponds to the data distribution and $t=1$ to a standard Gaussian. DDIM sampling can be formulated as the deterministic ODE
\begin{equation}
\label{eq:pf-ode}
    \dot{\bx}(t) = f(t)\bx(t) - \frac{1}{2} g(t)^2 \nabla_{\bx} \log p_t(\bx(t)),
\end{equation}
where $f,g:[0,1]\rightarrow\rn$ define the forward diffusion process and $\nabla_{\bx} \log p_t(\bx)$ is the score function, which is approximated by a neural network $s_\btheta(\bx(t),t)$.

Under mild regularity assumptions, 
\citet{gao2025diffusionmeetsflow} reparameterise Equation \eqref{eq:pf-ode} as a flow matching model:
\begin{equation}
\label{eq:velocity-ddim}
    u_\btheta(\bx(t),t) \equiv f(t)\bx(t) - \frac{1}{2} g(t)^2 s_\btheta(\bx(t),t).
\end{equation}

From this perspective, DDIM sampling is a particular parameterisation of a deterministic flow. 
This equivalence implies that samples from a diffusion process can be generated using a flow matching model with the appropriate parameterisation, and vice versa. 
For this reason, we adopt the flow matching formulation throughout.

\subsubsection{Memorisation in Generative Models}
In this subsection we will define the nearest neighbour memorisation measure.
We say that a point is memorised if it is much closer to one point from the training set than to all others \cite{yoon2023diffusion, buchanan2025edge}. 
\begin{definition}
    A generated data sample, $\widehat{\bx}$, is memorised if for a fixed threshold $c\in(0,1)$,
\begin{equation}
    \lVert \widehat{\bx} - \bx^{\left(1\right)}(\widehat{\bx})\rVert^2 \leq c \lVert \widehat{\bx} - \bx^{\left(2\right)}(\widehat{\bx})\rVert^2,
\end{equation}
where $\bx^{\left(1\right)}\left(\widehat{\bx}\right)$ and $\bx^{\left(2\right)}\left(\widehat{\bx}\right)$ are the closest and second-closest training samples to $\widehat{\bx}$, respectively.
\end{definition}
 The corresponding memorisation ratio \cite{buchanan2025edge} of a generative model $g_\btheta$ is given by
 {\small
\begin{equation}
    \label{eq:memorisation-ratio}    
\frac{1}{N} \sum_{i=1}^N \mathbbm{1} \left[\left\lVert \widehat{\bx}^i - \bx^{\left(1\right)}(\widehat{\bx}^i)\right\rVert^2 \leq c \left\lVert \widehat{\bx}^i - \bx^{\left(2\right)}(\widehat{\bx}^i)\right\rVert^2 \right],
\end{equation}}
where $\{ \widehat{\bx}^i = g_{\btheta}\left(\bx_0^i\right) \mid \bx_0^i \sim p_0  \}_{i=1}^N$ is a set of $N$ generated data points.
As we will discuss in Subsection \ref{subsec:memorisationmeasure}, there are several definitions of memorisation – e.g. by accounting for the average distance to the $K$ nearest training samples, and not only the first and second closest \citep{chen2024towards}.

\subsection{Bayesian Deep Learning}
\label{sec:bayesian} 
One way to quantify uncertainty of a neural network is to consider the variation in the outputs of an ensemble of learners. However, instead of training multiple neural networks, a common approach is to treat the model parameters in a \textit{Bayesian} manner by defining a posterior distribution over the model parameters using Bayes' rule. We can consider the negative log-posterior as a loss function given by
\begin{align}
    \mathcal{L}_{\text{post}}\left(\btheta, \mathcal{D}\right) &= - \log \frac{p\left(\mathcal{D}|\btheta\right)p\left(\btheta\right)}{p\left(\mathcal{D}\right)} \\
    \label{eq:lpost}
    &\propto - \log p\left(\mathcal{D} | \btheta\right) - \log p\left(\btheta\right).
\end{align}
Although the log-posterior is intractable, the fact that the log-joint is proportional to it allows us to define an approximate posterior distribution over the model parameters that respects the local geometry at the optimum. We can define this distribution \textit{post-hoc} using the Laplace approximation:
\begin{equation}  
    \label{eq:laplace}
    q\left(\btheta|\mathcal{D}\right)= \mathcal{N}\left(\btheta\mid\btheta^\ast, \mathbf{H}_{\mathcal{L}_{\text{post}}}^{-1}\left(\btheta^\ast\right)\right),
\end{equation}
where $\btheta^\ast=\arg\min_\btheta \mathcal{L}_{\text{post}}\left(\btheta\right)$ is the \textit{maximum a posteriori} (MAP) estimate and $\mathbf{H}_{\mathcal{L}}\left(\btheta\right) := \nabla_{\theta}^2 \mathcal{L}_{\text{post}}\left(\btheta, \mathcal{D}\right)$ is the Hessian of the posterior loss, which we evaluate at the MAP. We denote a sample from this Euclidean/classic Laplace approximation by $\btheta^{s}_{\text{E}} = \btheta^\ast + \bv$ where $\bv \sim \mathcal{N}\left(\boldsymbol{0}, \mathbf{H}_{\mathcal{L}}^{-1}(\btheta)\right)$.

\subsubsection{The Riemannian Laplace Approximation}\label{subsec:RLA}
\citet{bergamin2023riemannian} introduced a Riemannian take on the Laplace approximation, which outperforms the conventional method by respecting the geometric structure of the true posterior. This approach constrains samples from the approximate posterior to lie within high-density regions of the true posterior, while the standard Laplace approximation has no such guarantee. We explain a few concepts from the field of Riemannian geometry in Appendix \ref{app:geodesics}.

Mathematically, the Riemannian Laplace approximation $q_{\mathcal{M}}\left(\btheta| \mathcal{D}\right) = \left(f_{\mathcal{M}}\right)_{\#} q\left(\bv|\mathcal{D}\right)$ is the pushforward of a distribution of vectors $q\left(\bv|\mathcal{D}\right) = \mathcal{N}\left(\bv \mid \boldsymbol{0}, \mathbf{H}_{\mathcal{L}}^{-1}\left(\btheta^\ast\right)\right)$ through a mapping $f_\mathcal{M}$. This mapping depends on the geometry of the \emph{loss manifold}, especially on our choice of metric, which defines a way of measuring distances on the manifold. 

We define the (posterior) loss manifold as the graph of the posterior loss function given in Equation \eqref{eq:lpost}: \begin{equation}
    \mathcal{M} = \left\{h\left(\btheta\right) \mid \btheta\in\Theta\right\} \subseteq \mathbb{R}^{\left(K+1\right)}.
\end{equation}
A natural parametrisation of $\mathcal{M}$ is given by
\begin{equation}
    h:\Theta\rightarrow\rn^{K+1}, \quad h(\btheta)=\left(\theta_1, \dots, \theta_K,\mathcal{L}_{\text{post}}\left(\btheta\right)\right).
\end{equation}
This parametrisation has also been considered in \cite{ 
hartmann2022lagrangian, jacobsen2025monge, williams2025geodesic}, for instance.

For every $\bv \sim q\left(\bv|\mathcal{D}\right)$, there exists a curve $\balpha(t)\subseteq\Theta$ starting at $\balpha(0)=\btheta^*$ with initial velocity $\dot{\balpha}(0)=\bv$ such that $\bgamma(t)=\bigl(\balpha(t),\mathcal{L}(\balpha(t))\bigr)$ is a geodesic (a locally shortest path on the loss manifold $\mathcal{M}$), as illustrated in Figure \ref{fig:geodesic-figure}.

To sample from the Riemannian Laplace approximation, we compute the curve $\balpha(t)$ by solving the \emph{geodesic equation}: 
\begin{equation}
    \label{eq:geodesic-equation}
    \Ddot{\balpha}_k(t)=-\sum_{i,j=1}^n \dot{\balpha}_i(t)\dot{\balpha}_j(t)\cdot\Gamma_{ij}^k(\balpha(t)), 
\end{equation}
where $\Gamma_{ij}^k$ denote the \emph{Christoffel symbols}, and we set
\begin{equation}
    \label{eq:geodesic-initial-cond}
    \boldsymbol{\balpha}(0)=\btheta^\ast \qquad\text{and}\qquad \dot{\boldsymbol{\balpha}}(0) = \bv \sim q\left(\bv | \mathcal{D}\right)
\end{equation}
as the initial conditions. A sample from the Riemannian approximate posterior is then obtained by evaluating at $t=1$, hence $\btheta^{s}_{\text{R}} = \boldsymbol{\balpha}(1)$.
This is equivalent to computing the \emph{exponential map} 
followed by a projection to the parameter space, i.e. $\left(\btheta^{s}_{\text{R}}, \mathcal{L}\left(\btheta^{s}_{\text{R}}\right)\right)=\operatorname{Exp}_{h(\btheta^\ast)}(\mathbf{J}_h(\btheta^*)\bv)$. 

It should be noted that the Christoffel symbols depend on the choice of metric. 
We equip $\mathcal{M}$ with the \textit{pullback} metric as follows.
For two vectors $\boldsymbol{u}_1,\boldsymbol{u}_2\in\Theta$, we evaluate the scalar product at $h(\btheta)\in\mathcal{M}$ by computing 
\begin{eqnarray}
    \langle \boldsymbol{u}_1,\boldsymbol{u}_2\rangle_{\mathbf{G}(\btheta)}&=&\left(\mathbf{J}_h(\btheta) \boldsymbol{u}_1\right)^\top \left(\mathbf{J}_h(\btheta) \boldsymbol{u}_2\right)\\
    &=&\boldsymbol{u}_1^\top\underset{=\mathbf{G}(\btheta)}{\underbrace{\mathbf{J}_h(\btheta)^\top\mathbf{J}_h(\btheta)}} \boldsymbol{u}_2
\end{eqnarray}
where $\mathbf{J}_{h}\left(\btheta\right) \in \mathbb{R}^{\left(K+1\right)\times K}$ is the Jacobian of $h$ evaluated at $\btheta$ and the matrix $\mathbf{G}(\btheta)$ is given by 
\begin{equation}    
    \label{eq:monge-metric}
    \mathbf{G}(\btheta)=\mathbb{I}_K +\nabla_\theta\mathcal{L}(\btheta)\nabla_\theta\mathcal{L}(\btheta)^\top.
\end{equation}
Using this metric, the geodesic equation \eqref{eq:geodesic-equation} reduces to
\begin{equation}
    \label{eq:geodesic-equation-monge}
    \Ddot{\balpha}\left(t\right) = - \frac{\dot{\balpha}\left(t\right)^\top \mathbf{H}_{\mathcal{L}}\left(\balpha\left(t\right)\right) \dot{\balpha}\left(t\right)}{1 + \lVert \nabla_{\theta}\mathcal{L}\left(\balpha\left(t\right)\right)\rVert^2} \cdot \nabla_{\theta}\mathcal{L}\left(\balpha\left(t\right)\right),
\end{equation}
which can be vectorised and computed with automatic differentiation. \citet{yu2023riemannian} choose the Fisher metric, which is less efficient. \citet{jacobsen2025staying} take a similar approach in the context of input data augmentation. For further details, we refer to the textbook on differential geometry by \citet{Lee00} and the lecture notes by \citet{Gudmundsson2025_Riemann}.

\begin{figure}
    \centering
    \includegraphics[width=0.75\linewidth]{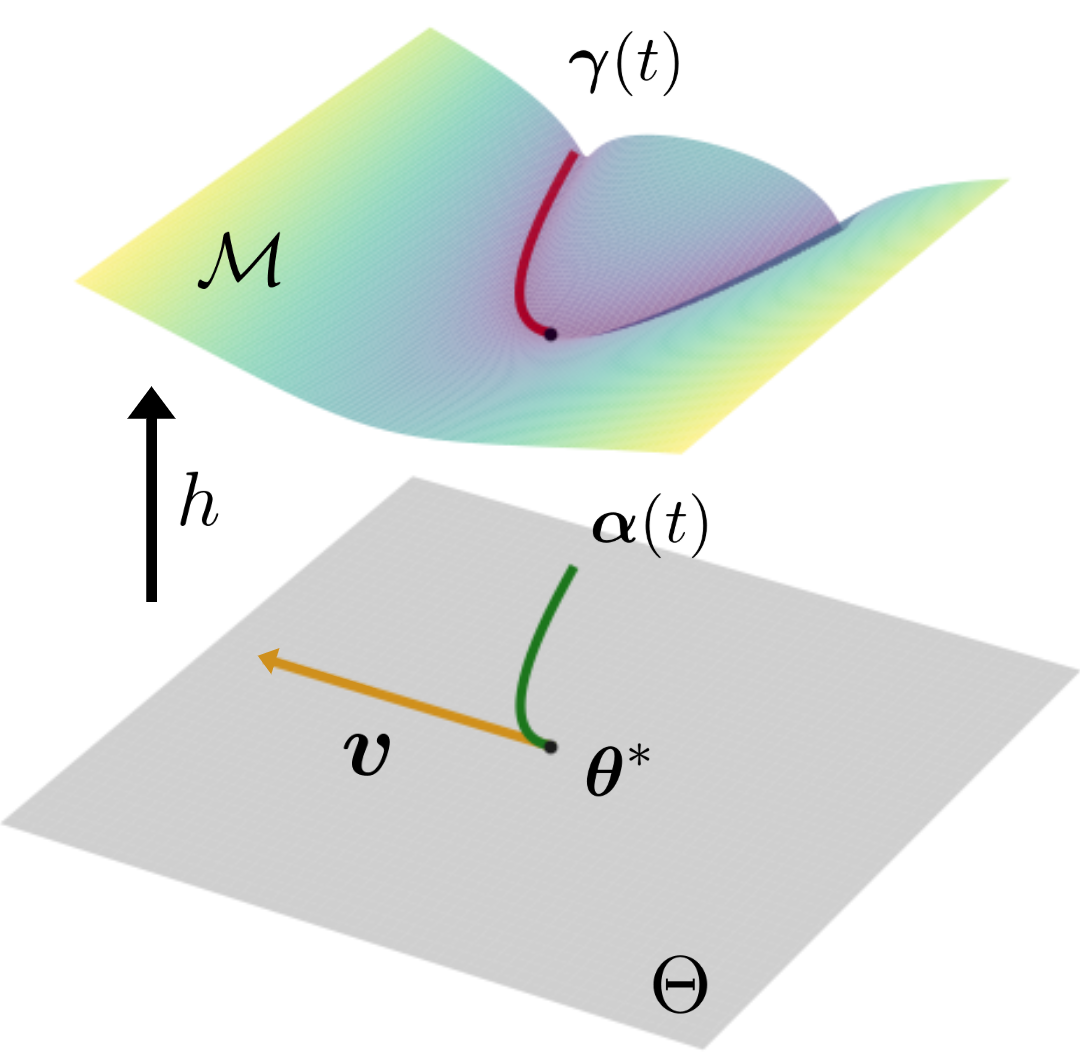}
    \caption{The flat parameter space $\Theta$ and a two-dimensional surface $h(\Theta)=\mathcal{M}.$ The arrow (\orangeline) represents the initial velocity vector $\bv$ of $\balpha(t)$ (\greenline) at the starting point $\btheta^*=\balpha(0)$, that is, $\dot{\balpha}(0)=\bv$. By mapping $\balpha(t)$ to $\mathcal{M}$ through $h$, we obtain the corresponding geodesic $\bgamma(t)=h(\balpha(t))$ (\redline) on $\mathcal{M}.$ In this specific example we obtain a geodesic path lying in a low loss region.}
    \label{fig:geodesic-figure}
\end{figure}

\section{Riemannian Bayesian Inference in Generative Models}

\begin{algorithm}[tb]
    \caption{Sampling the Riemannian Laplace Approx.}
    \label{alg:RLA-short}
    \begin{algorithmic}[1]
        \STATE \textbf{Input:} Generator $g_\btheta$, data $\mathcal{D}_{\bx}$, base distribution $p_0$, parametrisation $h$ of manifold $\mathcal{M}$
        \STATE \textbf{Output:} Samples $\widehat{\bx}$
        \vspace{0.5em}
        
        \STATE \textbf{Step 1:} Find the MAP estimate 
        \vspace{-5pt}
        \[
        \btheta^\ast = \arg\min_\theta \mathcal{L}(\btheta, \mathcal{D}_{\bx})
        \]
        \vspace{-15pt}
        \STATE \textbf{Step 2:} Compute (or approximate) the Hessian at $\btheta^\ast$  
        \vspace{-5pt}
        \[
        \mathbf{H}_{\mathcal{L}}\left(\btheta^\ast\right) = \nabla_\theta^2 \mathcal{L}(\btheta^\ast)
        \]
        \vspace{-16pt}
        \STATE \textbf{Step 3:} Sample initial velocity and map to $\mathcal{M}$ by
        \vspace{-5pt}
        \[
        \bv \sim \mathcal{N}(\mathbf{0}, \mathbf{H}_{\mathcal{L}}^{-1}\left(\btheta^\ast\right))
        \] 
        \vspace{-15pt}
        \[
        \bigl(\btheta^{s}_{\text{R}}, \mathcal{L}\left(\btheta^{s}_{\text{R}}\right)\bigr) = \operatorname{Exp}_{h(\btheta^\ast)}(\mathbf{J}_h(\btheta^\ast)\bv)
        \]
        \vspace{-13pt}
        \STATE \textbf{Step 4:} Generate new data using perturbed parameters  
        \vspace{-5pt}
        \[
        \bx_0 \sim p_0, \quad \widehat{\bx} = g_{\btheta^{s}_{\text{R}}}(\bx_0)
        \]
        \vspace{-15pt}
    \end{algorithmic}
\end{algorithm}

A diffusion-like generative model is by definition not based on a likelihood function. Although the loss objective of such models can be interpreted as a likelihood function if the per-time-step loss terms are sufficiently weighted across time \cite{jazbec2025generative, song2021maximum}, this is generally not the case. For this reason, we consider the Gibbs posterior (or generalised posterior), which replaces the negative log-likelihood in Equation \eqref{eq:lpost} with a (possibly weighted) loss function. In the case of flow matching, this generalised posterior loss function is given by:
\begin{equation}
    \widetilde{\mathcal{L}}_{\text{post}}\left(\btheta, \mathcal{D}_{\bx}\right) \propto \mathcal{L}_{\operatorname{FM}}\left(\btheta, \mathcal{D}_{\bx}\right) - \log p\left(\btheta\right).
\end{equation}
We will consider a uniform prior which is uninformative by definition since $\log p\left(\btheta\right)$ is constant. Consequently, the generalised posterior loss reduces to the loss function of the generative model. Our Riemannian Bayesian treatment of the generative model then follows a 4-step strategy described in Algorithm \ref{alg:RLA-short}. In contrast, the classic (non-Bayesian) generative model only follows Step 1 and Step 4. We provide a full algorithmic overview for large-scale experiments in Algorithm \ref{alg:riemannian-bayesian-inference} in Appendix \ref{app:image-experiments}. Note that we perturb the generator's parameters \emph{before} carrying out its sampling process. 

\subsection{Exact Likelihood and Generative Uncertainty}\label{subsec:gen-u}
The induced distribution of generated samples is the pushforward of the base distribution $p_0$. If the generator $g_{\btheta}$ is invertible and deterministic for a fixed $\btheta$ -- which is the case for flow matching -- we can further leverage the change-of-variables formula. This gives the explicit density:
\begin{equation}
    \label{eq:generative-likelihood}
    p\left(\widehat{\bx}|\btheta\right) = p_0\left(g_{\btheta}^{-1}\left(\widehat{\bx}\right)\right)\cdot |\det \mathbf{J}_{g_{\btheta}^{-1}} \left(\widehat{\bx}\right)|.
\end{equation}
Note that this construction allows us to directly compute the likelihood of a data point according to the learnt generator. Hence, we can express the full distribution of generated samples by marginalising over the Riemannian posterior:
\begin{equation}
    \label{eq:posterior-predictive}
    p\left(\widehat{\bx}|\mathcal{D}_{\bx}\right) = \mathbb{E}_{q_{\mathcal{M}}\left(\btheta|\mathcal{D}_{\bx}\right)}\left[ p(\widehat{\bx} | \btheta)\right].
\end{equation}
This is known as the \textit{posterior predictive distribution} and is well known in predictive modelling, as it forms a posterior distribution over functions rather than parameters. Measuring its variability, e.g. by using the entropy measure, provides a way to quantify uncertainty of the model.

\citet{jazbec2025generative} leverage this construction for generative models to propose the concept of \textit{generative uncertainty}. They formulate a semantic likelihood by encoding generated images to a semantic latent space (such as CLIP) and then placing a Gaussian distribution over the sample.
Our formulation of the likelihood of a generated image in Equation \eqref{eq:generative-likelihood} requires no such construction and can be directly computed, using only the generative model itself.

\section{Theoretical Insights}\label{sec:theory}
In this section we prove that parameter perturbations can reduce memorisation. We argue that this is due to the resulting changes of the learnt velocity field, and thus the generated data samples. We further show that the Riemannian method achieves better generalisation than its Euclidean counterpart.

\subsection{How Perturbations Impact the Vector Field}
\label{sub:perturbations-impact}
The parameter space perturbations induce new vector fields. For a fixed initial value, we analyse the trajectories and endpoints of a particle subject to these new vector fields. For the classic Laplace, there is a bias.
For small perturbations, and assuming that $u_\btheta$ is sufficiently smooth for all $(\bx,t)$ in our domain of interest, we can use a Taylor approximation:

{\small\begin{equation}
    u_{\btheta^{s}_\text{E}}(\bx(t), t)\approx u_{\btheta^\ast}(\bx(t), t)+(\btheta^{s}-\btheta^\ast)^\top\nabla_{\theta} u_{\btheta^\ast}(\bx(t),t).\label{eq:taylor-u}
\end{equation}}
Plugging this into Equation \eqref{eq:ODE-flow}, and using that $\btheta^{s}_{\text{E}}-\btheta^\ast=\bv\sim\mathcal{N}(\boldsymbol{0},\mathbf{\Sigma}),$
this induces a \emph{random ODE}:
\begin{align}\label{eq:RODE}
    &\bx(0,\bv)=\bx_0,\notag \\ &\dot{\bx}(t,\bv)=u_{\btheta^\ast}(\bx(t,\bv),t)+\bv^\top\nabla_\theta u_{\btheta^\ast}(\bx(t,\bv),t).
\end{align}
Due to the dependence on $\bv,$ we will denote the perturbed paths as 
$\bx(t,\bv).$ Each such random vector field as in Equation \eqref{eq:RODE} now induces a generator $g_{\btheta^\ast+\bv},$ which takes initial values $\bx_0\sim p_0$ to some $\widehat{\bx}.$ Thus, $g_{\btheta^\ast}(\bx_0)$ is the endpoint of the unperturbed trajectory.

Assuming that for a fixed initial value $\bx_0$ the mapping $$\varphi_{\bx_0}:\Theta\rightarrow\mathcal{X},\quad \varphi_{\bx_0}(\bv)=g_{\btheta^\ast+\bv}(\bx_0)$$ is smooth in $\bv,$ we may write: 
{\small\begin{align*}
      \varphi_{\bx_0}(\bv)\approx &g_{\btheta^\ast}(\bx_0)+\bv^\top\frac{\partial \varphi_{\bx_0}(\bv)}{\partial\bv}\vert_{\bv=0}+\frac{1}{2}\bv^\top\frac{\partial^2\varphi_{\bx_0}(\bv)}{\partial\bv^2}\vert_{\bv=0}\bv.
\end{align*}}
Thus, depending on these sensitivities, parameter perturbations can lead to perturbations of generated samples.
Consider the various endpoints of perturbed trajectories for a fixed starting point $\bx_0,$ we have that 

{\small
\begin{align*}
    \mathbb{E}_{q(\bv)}\left[\varphi_{\bx_0}(\bv)\right]
    &=g_{\btheta^\ast}(\bx_0)+\frac{1}{2}\mathbb{E}_{q(\bv)}\left[\bv^\top \frac{\partial^2\varphi_{\bx_0}(\bv)}{\partial\bv^2}\vert_{\bv=0} \bv\right],\\
    \textbf{Var}_{q(\bv)}\left[\varphi_{\bx_0}(\bv)\right]&=\mathbb{E}_{q(\bv)}\left[ \frac{\partial \varphi_{\bx_0}(\bv)}{\partial\bv}\vert_{\bv=0}\mathbf{\Sigma}\frac{\partial \varphi_{\bx_0}(\bv)}{\partial\bv}\vert_{\bv=0}^\top\right]. 
\end{align*}}
This shows that parameter perturbations can be biased and their variance depends on $\mathbf{\Sigma}=\mathbf{H}_{\mathcal{L}}^{-1}\left(\btheta^\ast\right).$ 

\subsection{Why Perturbations Help Mitigate Memorisation}
\label{sub:pertubations-mitigate-memorisation}

We show that points that are on the edge of being memorised can gain sufficient distance from the training samples to be reclassified as not memorised. In the previous section we have shown that the perturbed parameters induce different vector fields. Hence, we expect that for some initial values we obtain different endpoints. Let $\bx_0$ be such an initial value, $\btheta^*$ the MAP estimate and $\btheta'$ the perturbed parameter. Suppose they induce the generators $g_{\btheta}$ and $g_{\btheta'}$ such that $$\widehat{\bx}'=g_{\btheta'}(\bx_0)\neq g_{\btheta}(\bx_0)=\widehat{\bx}.$$ We write $\boldsymbol{\delta}=\widehat{\bx}'-\widehat{\bx}$ to denote the displacement vector between the two generated points.  We will show that perturbations are most influential for data points that are close to the memorisation boundary: if a point $\widehat{\bx}$ is clearly memorised, or clearly not memorised, this is unlikely to change under perturbations. However, points that are only narrowly classified as memorised might get pushed away from the closest training point $\bx^{(1)}(\widehat{\bx})$ after a small perturbation of the vector field. This can be illustrated in a simplified case. 

\begin{theorem}\label{th:margin}
    Let $\widehat{\bx}$ be memorised, i.e. $\lVert \widehat{\bx} - \bx^{(1)}(\widehat{\bx})\rVert^2 \leq c \lVert \widehat{\bx} - \bx^{(2)}(\widehat{\bx})\rVert^2.$ Suppose that the perturbed point $\widehat{\bx}'=\widehat{\bx}+\boldsymbol{\delta}$ wanders away from $\bx^{(1)}(\widehat{\bx})$, and straight towards $\bx^{(2)}(\widehat{\bx})$. Then, $\widehat{\bx}'$ is not memorised only if 

\begin{align*}
    \lVert\boldsymbol{\delta}\rVert(1+\sqrt{c})&<\lVert \widehat{\bx} - \bx^{\left(2\right)}(\widehat{\bx})\rVert-\sqrt{c} \lVert \widehat{\bx} - \bx^{\left(1\right)}(\widehat{\bx})\rVert,\\
    \lVert\boldsymbol{\delta}\rVert(1+\sqrt{c})&>\sqrt{c} \lVert \widehat{\bx} - \bx^{\left(2\right)}(\widehat{\bx})\rVert-\lVert \widehat{\bx} - \bx^{\left(1\right)}(\widehat{\bx})\rVert.
\end{align*}
\end{theorem}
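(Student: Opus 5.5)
The plan is to reduce the statement to a one-dimensional computation along the line through $\widehat{\bx}$ and $\bx^{(2)}(\widehat{\bx})$. Write $a=\lVert \widehat{\bx}-\bx^{(1)}(\widehat{\bx})\rVert$, $b=\lVert \widehat{\bx}-\bx^{(2)}(\widehat{\bx})\rVert$ and $d=\lVert\boldsymbol{\delta}\rVert$. The memorisation hypothesis reads $a\le \sqrt{c}\,b$, and in particular $a<b$ since $c<1$.

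I will formalise the phrase ``wanders away from $\bx^{(1)}(\widehat{\bx})$ and straight towards $\bx^{(2)}(\widehat{\bx})$'' as the simplified collinear configuration of the setting: $\boldsymbol{\delta}$ is a nonnegative multiple of $\bx^{(2)}(\widehat{\bx})-\widehat{\bx}$, with $d\le b$, and $\widehat{\bx}$ lies between the two training points. In this configuration the new distances are exactly
\begin{equation*}
d_1:=\lVert \widehat{\bx}'-\bx^{(1)}(\widehat{\bx})\rVert=a+d,\qquad d_2:=\lVert \widehat{\bx}'-\bx^{(2)}(\widehat{\bx})\rVert=b-d.
\end{equation*}
Outside exact collinearity, the triangle inequality only gives $d_1\le a+d$, which is the direction one would need for a sharper version. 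I will also assume, as the simplified case intends, that no third training point becomes one of the two nearest neighbours of $\widehat{\bx}'$. The nearest and second-nearest neighbours of $\widehat{\bx}'$ are then $\{\bx^{(1)}(\widehat{\bx}),\bx^{(2)}(\widehat{\bx})\}$ in some order.

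The key step is to write ``$\widehat{\bx}'$ is not memorised'' under this assumption as $\min(d_1,d_2)^2> c\,\max(d_1,d_2)^2$. Taking square roots of these nonnegative quantities, this is $\min(d_1,d_2)>\sqrt{c}\max(d_1,d_2)$. Since $\sqrt{c}<1$, this is equivalent to requiring both $d_1>\sqrt{c}\,d_2$ and $d_2>\sqrt{c}\,d_1$. To see this, split into the cases $d_1\le d_2$ and $d_2<d_1$. In each case one of the two inequalities is exactly the non-memorisation condition, and the other holds automatically because the larger distance exceeds $\sqrt{c}$ times the smaller.

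Substituting the distances finishes the argument:
\begin{itemize}
\item $b-d>\sqrt{c}(a+d)$ rearranges to $d(1+\sqrt{c})<b-\sqrt{c}\,a$, the first displayed inequality;
\item $a+d>\sqrt{c}(b-d)$ rearranges to $d(1+\sqrt{c})>\sqrt{c}\,b-a$, the second.
\end{itemize}
Note that $\sqrt{c}\,b-a\ge 0$ by the hypothesis. Hence the second inequality genuinely forces a minimal displacement, while the first caps it so that $\widehat{\bx}'$ does not overshoot into being memorised by $\bx^{(2)}(\widehat{\bx})$. In the idealised setting this gives the ``only if'' direction, and in fact sufficiency as well.

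I expect the main obstacle to be the modelling step rather than the algebra: justifying the exact distance formulas and the claim that the relevant pair of neighbours is unchanged. I would state both explicitly as the content of the ``simplified case''. Exact collinearity is what makes $d_1=a+d$ hold rather than only $d_1\le a+d$, and the neighbour assumption rules out a third training point taking over.
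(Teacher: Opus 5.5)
Your proposal is correct and follows the paper's proof. It uses the same collinear configuration giving $\lVert \widehat{\bx}'-\bx^{(1)}(\widehat{\bx})\rVert=a+d$ and $\lVert \widehat{\bx}'-\bx^{(2)}(\widehat{\bx})\rVert=b-d$, the same pair of conditions $d_1>\sqrt{c}\,d_2$ and $d_2>\sqrt{c}\,d_1$, and the same rearrangement. You are more careful than the paper in two places: the paper asserts the two-sided condition without your case split, and it leaves implicit the assumption that no third training point becomes one of the two nearest neighbours of $\widehat{\bx}'$, which the ``only if'' direction genuinely needs.
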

\begin{figure}[hbt!]
    \centering
   \vspace{-10pt}
    \includegraphics[width=0.7\linewidth]{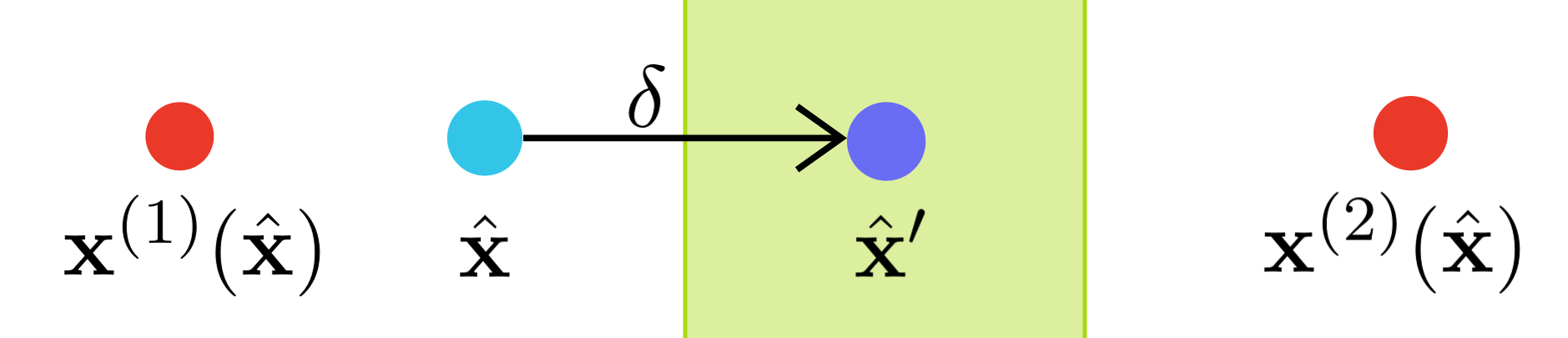}
   \vspace{5pt}
    \caption{In the setting of Theorem \ref{th:margin}, in which all considered points lie on one straight line, the perturbation makes a generated point (\turquoisedot) wander straight towards the second closest training sample (\reddot). If the new point (\periwinkledot) falls into the green zone, it is no longer classified as memorised. The size of the green zone depends on $\lVert \widehat{\bx} - \bx^{\left(1\right)}(\widehat{\bx})\rVert+\lVert \widehat{\bx} - \bx^{\left(1\right)}(\widehat{\bx})\rVert,$ and is maximised if $\widehat{\bx}$ is right between the two training samples. }
    \label{fig:perturbedpoint}
\end{figure}
For the proof, see Appendix \ref{app:theory}. 
Due to its strong assumptions, this argument is not intended to be quantitatively predictive, but it provides a useful qualitative explanation of why perturbations mitigate memorisation.

\begin{figure*}[tb]
    \centering

    \includegraphics[width=0.9\linewidth]{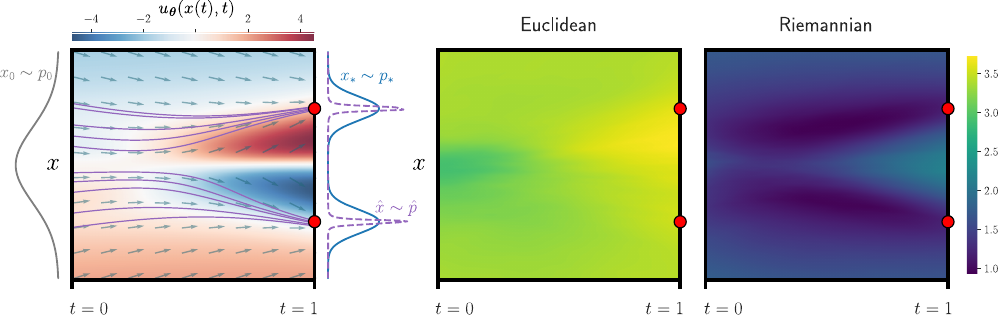}
    \caption{A 1D flow matching toy problem with a Gaussian base distribution $p_0$ and a GMM as the target distribution $p_\ast$. \textit{Left:} the learnt conditional vector field $u_{\btheta}\left(x(t), t\right):\rn\times[0,1]\rightarrow\rn$ at the optimal parameters $\btheta^\ast$ for $x(t) \in [-3, 3]$ and $t\in[0,1]$. Each line (\purpleline) is a trajectory $x(t)$ of a noise sample $x_0\sim p_0$ under $u_{\btheta^\ast}$, which yields $x(1)=\widehat{x}\sim p\left(\widehat{x}\right)$. This distribution overfits to the two fixed training points (\reddot), hence the generative model $g_{\btheta^\ast}$ has learnt to \textit{memorise} rather than generalise to $p_\ast$. \textit{Right:} we draw $S=1000$ models $\btheta^s \sim q\left(\btheta|\mathcal{D}\right)$ from the Euclidean and Riemannian approximate posterior. Each model $\btheta^s$ gives a specific velocity field (as in \textit{left}). We show the standard deviation per $\left(x(t),t\right)$-coordinate computed over the $S$ different velocity fields.}
    \label{fig:velocity-field}
\end{figure*}

\subsection{Analysis of the Riemannian Paths}
\label{sub:riemannian-paths}

Although perturbing the model's parameters can mitigate memorisation, it comes with the risk of deteriorating its generalisation capabilities. We show that samples from our Riemannian Laplace preserve generalisation better than its Euclidean counterpart, since the perturbed parameters stay in low loss regions. 
It is evident that the Euclidean perturbation upper bounds the Riemannian, as it moves in a straight line, which is the shortest path in the flat parameter space. The Riemannian sample follows a different path, which cannot lead further away than the straight line. 
\begin{theorem}
    Following the constructions in Subsection \ref{subsec:RLA}, we have that $|| \btheta^{s}_{\text{E}}-\btheta^\ast||\geq||\btheta^{s}_{\text{R}}-\btheta^\ast||$.
\end{theorem}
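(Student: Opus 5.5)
Here $\btheta^{s}_{\text{E}}=\btheta^\ast+\bv$ and $\btheta^{s}_{\text{R}}=\balpha(1)$, where $\balpha$ solves the geodesic equation \eqref{eq:geodesic-equation-monge} with initial data \eqref{eq:geodesic-initial-cond}. The claim therefore reduces to $\lVert\balpha(1)-\balpha(0)\rVert\le\lVert\dot\balpha(0)\rVert$. The plan has three steps: bound the displacement by the Euclidean length of the curve, use conservation of speed along a geodesic in the pullback metric $\mathbf{G}$ of \eqref{eq:monge-metric}, and compare $\mathbf{G}$ with the identity.

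\textbf{Step 1 (straight line is shortest).} Since $\balpha(1)-\balpha(0)=\int_0^1\dot\balpha(t)\,dt$, the triangle inequality for integrals gives
\begin{equation*}
\lVert\btheta^{s}_{\text{R}}-\btheta^\ast\rVert\le\int_0^1\lVert\dot\balpha(t)\rVert\,dt .
\end{equation*}

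\textbf{Step 2 (constant Riemannian speed).} Because $\bgamma=h\circ\balpha$ is a geodesic of the Levi-Civita connection, its speed $\dot\balpha(t)^\top\mathbf{G}(\balpha(t))\dot\balpha(t)$ is constant in $t$. It therefore equals its value at $t=0$:
\begin{equation*}
\bv^\top\mathbf{G}(\btheta^\ast)\bv=\lVert\bv\rVert^2+\bigl(\nabla_\theta\mathcal{L}(\btheta^\ast)^\top\bv\bigr)^2=\lVert\bv\rVert^2,
\end{equation*}
using $\nabla_\theta\mathcal{L}(\btheta^\ast)=\boldsymbol{0}$ at the MAP. If a self-contained derivation is preferred, one can verify the constancy directly from \eqref{eq:geodesic-equation-monge}. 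Differentiate $E(t)=\lVert\dot\balpha\rVert^2+(\nabla\mathcal{L}^\top\dot\balpha)^2$, writing $s=\nabla\mathcal{L}^\top\dot\balpha$ and noting $\dot s=\dot\balpha^\top\mathbf{H}\dot\balpha+\nabla\mathcal{L}^\top\ddot\balpha$. Substituting the geodesic equation gives $\nabla\mathcal{L}^\top\ddot\balpha=-\dot\balpha^\top\mathbf{H}\dot\balpha\,\lVert\nabla\mathcal{L}\rVert^2/(1+\lVert\nabla\mathcal{L}\rVert^2)$, hence $\dot s=\dot\balpha^\top\mathbf{H}\dot\balpha/(1+\lVert\nabla\mathcal{L}\rVert^2)$. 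Then $\tfrac12\dot E=\dot\balpha^\top\ddot\balpha+s\dot s=-s\,\dot\balpha^\top\mathbf{H}\dot\balpha/(1+\lVert\nabla\mathcal{L}\rVert^2)+s\,\dot\balpha^\top\mathbf{H}\dot\balpha/(1+\lVert\nabla\mathcal{L}\rVert^2)=0$.

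\textbf{Step 3 (metric dominates Euclidean).} Since $\mathbf{G}=\mathbb{I}_K+\nabla\mathcal{L}\nabla\mathcal{L}^\top\succeq\mathbb{I}_K$, we get
\begin{equation*}
\lVert\dot\balpha(t)\rVert^2\le\dot\balpha(t)^\top\mathbf{G}(\balpha(t))\dot\balpha(t)=\lVert\bv\rVert^2
\end{equation*}
for all $t\in[0,1]$. Integrating over $t$ and combining with Step 1 yields $\lVert\btheta^{s}_{\text{R}}-\btheta^\ast\rVert\le\lVert\bv\rVert=\lVert\btheta^{s}_{\text{E}}-\btheta^\ast\rVert$.

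The main point needing care is Step 2. It requires that the geodesic exists on all of $[0,1]$ (I would assume $\mathcal{L}$ is smooth with a solution up to $t=1$, as the construction in Subsection \ref{subsec:RLA} implicitly does) and that $\btheta^\ast$ is an exact critical point. If $\btheta^\ast$ were only approximately optimal, the bound would instead read $\lVert\btheta^{s}_{\text{R}}-\btheta^\ast\rVert\le\sqrt{\lVert\bv\rVert^2+(\nabla\mathcal{L}(\btheta^\ast)^\top\bv)^2}$, so exact stationarity is needed for the stated inequality.
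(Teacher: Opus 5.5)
Your proposal is correct and follows essentially the same route as the paper's proof. Both arguments use four ingredients: the geodesic $\bgamma=h\circ\balpha$ has constant speed; the gradient vanishes at $\btheta^\ast$, so that speed is $\lVert\bv\rVert$; $\mathbf{G}\succeq\mathbb{I}_K$ bounds the Euclidean length of $\balpha$ by $\lVert\bv\rVert$; and displacement is at most arc length. Your version is slightly more careful in two places: you verify speed conservation directly from the Monge geodesic equation (correctly as \emph{constant} speed, whereas the paper says ``unit speed''), and you make the ``straight lines are shortest'' step precise via $\lVert\int_0^1\dot\balpha\,dt\rVert\le\int_0^1\lVert\dot\balpha\rVert\,dt$.
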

For the proof, see Appendix \ref{app:subsec:displacement}. 
Computation \ref{app:subsec:curve} in the Appendix approximates the curve $\balpha(t)$ using $n$ Euler steps of size $\epsilon=1/n$, and allows us to estimate $\balpha(1)$ for sufficiently small $\epsilon$.
\begin{equation}
    \btheta^{s}_{\text{R}}=\balpha(1)\approx\btheta^{s}_{\text{E}}+\epsilon^2\sum_{j=0}^{n-2}\left(n-1-j\right)\Ddot{\balpha}(j\epsilon).\label{eq:sum-approx}
\end{equation}
Combining  Equations \eqref{eq:geodesic-equation-monge}, \eqref{eq:taylor-u} and \eqref{eq:sum-approx}, we obtain 

\begin{equation}
    u_{\btheta^{s}_{\text{R}}}(\bx(t),t) \approx u_{\btheta^{s}_{\text{E}}}(\bx(t),t) -\epsilon^2 \boldsymbol{\kappa}^\top \nabla_\theta u_{\btheta^*}(\bx(t),t)\label{eq:riemann-geodesic}
\end{equation}
where the correction vector $\boldsymbol{\kappa}$ is given by
{\small
\begin{equation*}
    \boldsymbol{\kappa} = \sum_{j=0}^{n-2}(n-1-j)  \frac{\dot{\balpha}(j\epsilon)^\top \mathbf{H}_{\mathcal{L}}(\balpha(j\epsilon))\dot{\balpha}(j\epsilon)}{1+\lVert\nabla_\theta\mathcal{L}(\balpha(j\epsilon))\rVert^2}\nabla_\theta\mathcal{L}(\balpha(j\epsilon)).
\end{equation*}
}

This lets us compare how the Riemannian and Euclidean perturbations impact the vector field. The second term in Equation \eqref{eq:riemann-geodesic} opposes motion into directions of rapidly increasing loss. If $\dot{\balpha}(t)$ is aligned with directions where the loss increases sharply, then the correction term is of large magnitude. Specifically, if $\dot{\balpha}(t)$ points in the direction of the top-eigenvector of the Hessian, the difference between the straight (Euclidean) and geodesic (Riemannian) paths in the parameter space will be maximised. If however $\dot{\balpha}(t)$ points in directions where the loss in flat, the Euclidean and Riemannian vector fields will be close. Thus, the Riemannian method shrinks steps toward this direction, and helps us to stay in parameter regions with lower loss.

\section{Experiments}\label{sec:experiments}

\begin{figure*}[tb]
    \centering
    \includegraphics[width=0.75\linewidth]{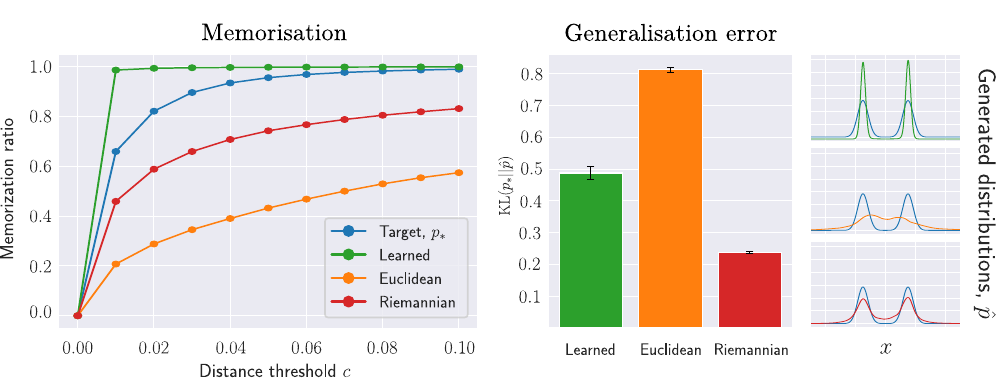}
    \caption{\textit{Left:} The memorisation ratio as a function of the distance threshold $c$ when generating data from the target distribution, the learnt distribution and the learnt distribution under the two Bayesian treatments. \textit{Right:} The generalisation error computed as the $\operatorname{KL}$-divergence between the target distribution and the generated data distributions. 
    For efficiency, we perform $50$ repetitions of computing $\operatorname{KL}$-divergences from a subset of $100$ generated data samples and plot the means and standard errors in the bar plot.
    The generated data distributions (\textit{most right}) correspond to pushing noise samples from $p_0$ through the generator $g_{\btheta}$ using the learnt model $\btheta^\ast$ (top), using several models sampled from the Euclidean Laplace approximation (middle), and using several models sampled from the Riemannian Laplace approximation (bottom). We visualise the resulting distributions using kernel density estimation. See Appendix \ref{app:experiments} for details.}
    \label{fig:results}
\end{figure*}

\subsection{Toy Experiments}

We first consider the problem of learning a flow matching model in $D$ dimensions with the base distribution being an isotropic Gaussian, $p_0\sim\mathcal{N}\left(\boldsymbol{0}, \mathbb{I}_D\right)$, and the target distribution $p_\ast$ being a Gaussian mixture model (GMM). See Appendix \ref{app:experiments} for details on the setup, and results for $D=2$.

\paragraph{1D:} For $D=1$, we define the target distribution to have means $\{\mu_1, \mu_2\}=\{-1.5, 1.5\}$ and variances $\sigma_1^2=\sigma_2^2=0.1$. For learning a generative model that \textit{memorises} (i.e. overfits), we construct a naively simple training set that consists of only $2$ points: $\mu_1$ and $\mu_2$. We show the learning problem and the overfitted velocity field in Figure \ref{fig:velocity-field} (\textit{left}).

We adopt a Riemannian Bayesian treatment of the model parameters following Section \ref{sec:bayesian}. We then define the distributions of generated data samples under the two approximate posteriors as in Equation \eqref{eq:posterior-predictive} by sampling $S=1000$ samples from the posterior $\btheta^s \sim q(\btheta|\mathcal{D}_{\bx})$ and $N=1000$ samples from the base distribution $\bx_{0} \sim p_0$. Note that each posterior sample induces a different velocity field. 

Figure \ref{fig:velocity-field} (\textit{right}) shows the uncertainty at each $(x_t, t)$-coordinate, computed as the standard deviation over the $S$ posterior samples. Notably, the velocity fields induced by the Riemannian samples are far less uncertain than the Euclidean ones, especially along the path where the original vector field has low magnitude. This is also seen in the symmetry patterns: the learnt vector field is an odd function in $x,$ while the Riemannian uncertainty is even, suggesting that velocity fields associated to perturbations from the Riemannian posterior still respect the symmetry of the original model. The Euclidean heat map has no such symmetry. 

We provide the memorisation and generalisation capabilities in Figure \ref{fig:results} and see that perturbing the model parameters reduces memorisation, no matter the choice of distance threshold $c$. This supports our argument in Subsection \ref{sub:pertubations-mitigate-memorisation}. Although the generated samples using Euclidean perturbations exhibit less memorisation than the Riemannian ones, this comes at the cost of poorer generalisation, measured as the alignment with the underlying target distribution.

\subsection{CIFAR-10}

We also apply our strategy to an unconditional diffusion model based on a U-Net architecture with roughly 36B parameters. As argued by \citet{gu2023memorization}, training on large amounts of training data results in limited to no memorisation. Therefore, we train a diffusion model on a subset of $N=1000$ training samples from CIFAR-10 \cite{krizhevsky2009learning}, yielding the MAP $\btheta^\ast$. For computational reasons, we restrict our Bayesian treatment to be on a subset of the parameters, specifically the parameters of the first layer.
Note that this construction serves as a proof-of-concept for applying our technique to high-dimensional settings, and that this approximate posterior does not capture the full flexibility of our approach.

Next, we approximate the Hessian at the MAP, noting that its spectrum is dominated by a few large eigenvalues, with most others being near zero (as also observed by \citet{roy2024reparameterization}). This indicates that only certain directions in the parameter space strongly influence the loss, and that it remains relatively flat along most other directions. Consequently, Riemannian perturbations differ from Euclidean ones mainly along the top-eigenvectors directions.

\begin{figure}[tb!]
    \includegraphics[width=0.97\linewidth]{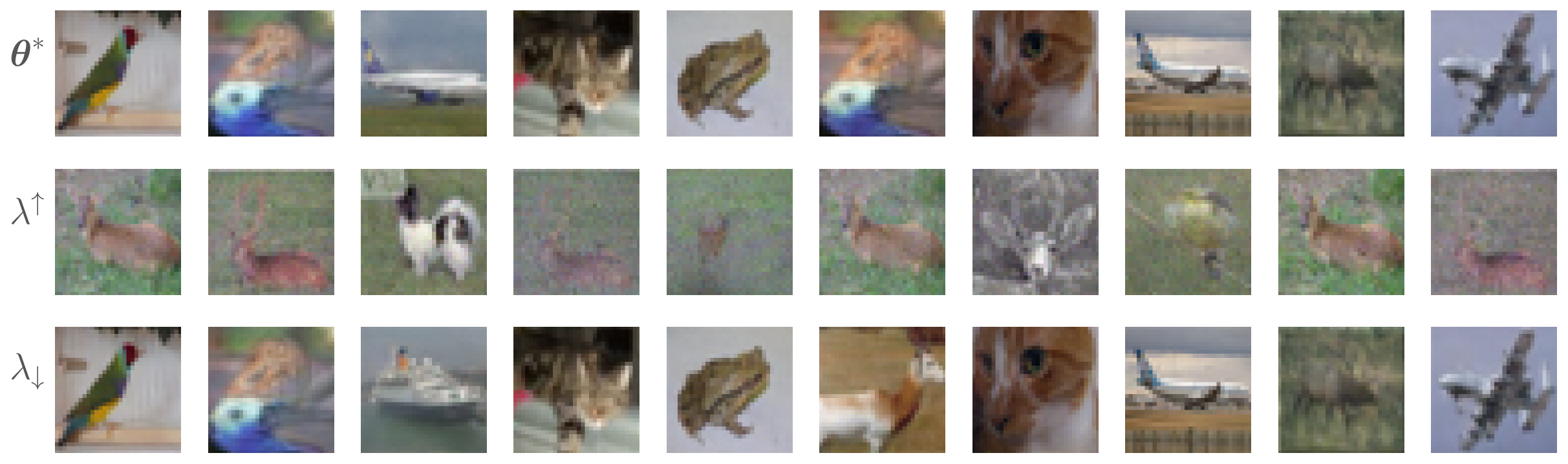}
    \hfill
    \caption{Generated images with the MAP model \textit{(top)} and under Riemannian perturbations along the eigenvectors with highest and lowest positive eigenvalues (\textit{middle} and \textit{bottom} row). For each column, we use a fixed initial value $\bx_0\sim p_0$. Along the bottom-eigenvector, the parameter space is flatter, and the generated images are close to the MAP generator's images. Images generated under perturbation by the top-eigenvector tend to converge to a few specific images: for three distinct $\bx_0,$ we obtain the same deer.}
    \label{fig:eigenvector-samples-endpoints}
\end{figure}
\begin{figure}[tb!]
    \vspace{-5pt}
    \includegraphics[width=0.97\linewidth]{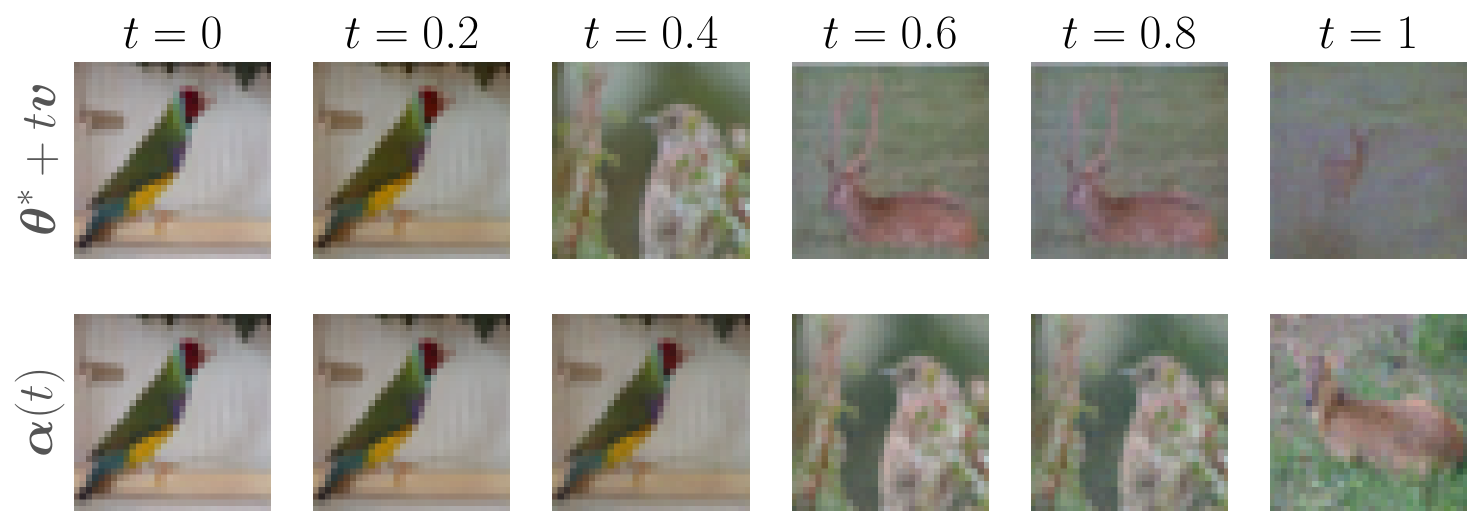}
    \hfill
    \caption{How a sample from a generative model changes when gradually perturbing the parameters along the path induced by the top-eigenvector. We fix the noise sample $\bx_0$ for all images. \textit{Top:} along the straight path $\btheta^\ast + t\bv$. \textit{Bottom:} along the curve $\balpha(t)$.}
    \vspace{-9pt}
    \label{fig:largest-eigenvector-compare-trajectories}
\end{figure}

We conduct a structured experiment by sampling along the most curved direction as well as the direction of the lowest positive eigenvalue, rather than using the full inverse Hessian as the Laplace covariance. In Figure \ref{fig:eigenvector-samples-endpoints}, we show the generated images when following the Riemannian perturbation technique with these eigenvectors as the initial velocities. While images generated using the bottom-eigenvector perturbation are identical or qualitatively close to the images generated with the MAP model, images generated with the top-eigenvector are visually very different, yet result in a similarly looking image for all considered base distribution samples. This indicates that the resulting vector field is a sink: there exists one dominant point $\widehat{\bx},$ such that all nearby paths converge towards it. Understanding this behaviour and how to avoid it constitutes future work.

In Figure \ref{fig:largest-eigenvector-compare-trajectories}, we experimentally verify the claim from Subsection \ref{sub:riemannian-paths}, namely that the Riemannian paths stay closer to well-generalising generators. By again considering the worst-case perturbation along the top-eigenvector, we see that the Euclidean approach yields a generator that ends up in a parameter space location where meaningless images are generated. As expected, the Riemannian paths favour low-loss regions, conserving a high image quality.

Although our method is not optimised for the image domain, we show that the parameters of complex generators can be manipulated in a controlled way. Further implications and connections to memorisation are discussed in Appendix \ref{app:image-experiments}.

\section{Related Work and Discussion}

\subsection{What Drives Generalisation?}
At present, it remains unresolved whether diffusion models truly generalise at all, and if so, how and why. \citet{kadkhodaie2023generalization} demonstrate empirically that large datasets are necessary to generalise, as they allow diffusion models to recover a geometry-aligned score function. Another promising line of work reveals that the amount of memorisation in generative models increases with training time and is inversely proportional to the size of the training set \cite{gu2023memorization, biroli2024dynamical, bonnaire2025diffusion}. In contrast, our work focuses on reducing memorisation after the generative model is already fully trained. 
We do so by investigating how to represent uncertainty and preserve generalisation around the learnt model, independent of further increases in data and training time.

\citet{bertrand2025closed} demonstrate that target stochasticity (the randomness of the training target arising from the sampling process) is not driving generalisation – in fact, there exists a closed form solution for the flow matching loss. Interestingly, this closed form solution only reproduces training samples. They conclude that generalisation occurs precisely when the network \emph{fails} to fit the optimal vector field. Our work, however, introduces stochasticity in the parameter space. By perturbing the optimal vector field, we \emph{deliberately prevent} the collapse to a deterministic flow, and achieve the described generalisation inducing effect.

\subsection{Balancing Generalisation and Memorisation.}
Our results highlight that reducing memorisation alone does not guarantee improved generalisation; rather, we observe a delicate trade-off between memorisation and generalisation. The Euclidean method exhibits the least memorisation, since the parameter perturbations encourage a broader variance over the output space. However, this comes at the cost of poorer generalisation, demonstrating that counteracting memorisation too aggressively can prevent the model from learning meaningful structures. The Riemannian method, on the other hand, provides a good balance: it still exhibits less memorisation than the baseline, while achieving substantially stronger generalisation.

\subsection{How Should We Measure Memorisation?}\label{subsec:memorisationmeasure}
The nearest neighbours memorisation ratio defined in Equation \eqref{eq:memorisation-ratio} is easy to evaluate, however a downside is the dependence on Euclidean distance, which does not take the structure of the data manifold into account and therefore does not necessarily align with semantic similarity in the data domain. In the case of image data, previous work \citep{pizzi2022self, chen2024towards} considers embedding the generated images and training images to a semantic latent space using a pre-trained encoder, e.g. trained using self-supervised learning. A different line of research \cite{khrulkov2020hyperbolic, atigh2022hyperbolic} embeds image data into the hyperbolic space instead. Hyperbolic manifolds have a negative curvature and are thus better at representing hierarchical data with minimal distortion. In the image domain, visual concepts can induce a hierarchy (for example, felines and canines have distinct shapes), but we can also consider a hierarchy based on information content or degradation.

Further, the constant $c$ is somewhat arbitrary: in the image domain, for example, it is often set to $c=\frac{1}{3}$ since this value has been found to work well empirically on the considered data sets. In the future, we hope to see a theory-backed rule of choosing an appropriate $c$ for a given data set in a way that aligns with human perception. Lastly, the nearest neighbour memorisation measure suffers from the curse of dimensionality: pairwise distances between samples tend to concentrate as the dimension increases, limiting the interpretability of the measure. We also want to remark that this criterion becomes meaningless in case of duplicate training points, which is the case for CIFAR-10.

Alternative measures for memorisation have been proposed, for example by \citet{ross2024geometric}. Their work compares the local intrinsic dimension (LID) of the ground truth data manifold and of the learnt data manifold. They label a point as memorised if the dimension of the learnt manifold is lower than the dimension of the ground truth manifold. This measure carries a nice geometric flavour and provides more qualitative information about the structure of the learnt distribution than merely evaluating distances between points.
However, the estimation of the LID is computationally heavy and harder to evaluate, although improvements have been made by \citet{kamkari2024geometricviewdatacomplexity}. We expect that our method also helps mitigate this notion of memorisation: if a perturbed vector field leads to a different endpoint, this increases the dimension of the learnt manifold.

\section{Conclusion} 
Our work presents a Bayesian and Riemannian framework for exploring the parameter space of large generative models. Adopting a geometry-informed Bayesian treatment of the model parameters can help reduce memorisation, without loosing generalisation capabilities. This finding is based both on theoretical and empirical foundations: Section \ref{sec:theory} provides a mathematical explanation, while Section \ref{sec:experiments} contains a one-dimensional example, allowing for visually interpreting the effects of our Riemannian Bayesian treatment of the model parameters. We lastly showed that our method scales to settings with high-dimensional parameter spaces.

\subsection{Limitations and Future Work}

Rather than competing with state-of-the-art image generators, we demonstrated the feasibility of our approach. Although our approximate posterior is flexible, it is inherently mode-based and centred around the MAP solution. In the image domain, the generated data points closely resemble the training data with added unstructured noise, which ideally should be structured to change the image concept. Furthermore, due to computational constraints in large models, we restrict the method to a subset of the parameter space. We expect that specialised geodesic solvers for high-dimensional manifolds could lead to improvements.

\section*{Acknowledgments}
This work was supported by the Danish Data Science Academy, which is funded by the Novo Nordisk Foundation (NNF21SA0069429) and VILLUM FONDEN (40516), and by the DFF Sapere Aude Starting Grant ``GADL'' and by the Horizon-EU EIC Pathfinder Open program to i-RASE: Intelligent Radiation Sensor Readout System, with reference number: HE EIC - i-RASE – 101130550. We thank Stas Syrota and Federico Bergamin for early discussions that helped shaping our understanding of the relationship between diffusion and flow matching.

\section{Impact Statement}
Generative models are increasingly deployed in high-stakes applications, and thus, mitigating memorisation is crucial. We propose a novel perspective on tackling this issue, and as a result, the proposed approach can help mitigate risks associated with memorisation, including the reproduction of sensitive data. The general aim of reducing memorisation in generative models can seem as appealing in order to avoid copyright infringement issues. However, as the underlying process becomes less transparent, it becomes harder for e.g. independent artists to prove such violations.

Beyond these considerations, our work contributes to the broader goal of building more reliable and robust generative models, and better understanding their underlying mechanisms.

\bibliographystyle{plainnat} 
\bibliography{refs}

@article{hartmann2022lagrangian,
  title={{Lagrangian manifold Monte Carlo on Monge patches}},
  author={Hartmann, Marcelo and Girolami, Mark and Klami, Arto},
  journal={International Conference on Artificial Intelligence and Statistics},
  year={2022}
}

@article{williams2025geodesic,
  title={{Geodesic slice sampler for multimodal distributions with strong curvature}},
  author={Williams, Bernardo and Yu, Hanlin and Luu, Hoang Phuc Hau and Arvanitidis, Georgios and Klami, Arto},
  journal={Uncertainty in Artificial Intelligence (UAI)},
  year={2025}
}

@article{jacobsen2025monge,
  title={{Monge SAM: Robust reparameterization-invariant sharpness-aware minimization based on loss geometry}},
  author={Jacobsen, Albert Kj{\o}ller and Arvanitidis, Georgios},
  journal={arXiv:2502.08448},
  year={2025}
}

@article{bergamin2023riemannian,
  title={{Riemannian Laplace approximations for Bayesian neural networks}},
  author={Bergamin, Federico and Moreno-Mu{\~n}oz, Pablo and Hauberg, S{\o}ren and Arvanitidis, Georgios},
  journal={Neural Information Processing Systems (NeurIPS)},
  year={2023}
}

@article{yu2023riemannian,
  title={{Riemannian Laplace approximation with the Fisher metric}},
  author={Yu, Hanlin and Hartmann, Marcelo and Williams, Bernardo and Girolami, Mark and Klami, Arto},
  journal={International Conference on Artificial Intelligence and Statistics (AISTATS)},
  year={2024}
}

@article{ross2024geometric,
  title={{A geometric framework for understanding memorization in generative models}},
  author={Ross, Brendan Leigh and Kamkari, Hamidreza and Wu, Tongzi and Hosseinzadeh, Rasa and Liu, Zhaoyan and Stein, George and Cresswell, Jesse C and Loaiza-Ganem, Gabriel},
  journal={Neural Information Processing Systems (NeurIPS)},
  year={2024}
}

@article{buchanan2025edge,
  title={{On the edge of memorization in diffusion models}},
  author={Buchanan, Sam and Pai, Druv and Ma, Yi and De Bortoli, Valentin},
  journal={Neural Information Processing Systems (NeurIPS)},
  year={2025}
}

@article{bertrand2025closed,
  title={On the Closed-Form of Flow Matching: Generalization Does Not Arise from Target Stochasticity},
  author={Bertrand, Quentin and Gagneux, Anne and Massias, Mathurin and Emonet, R{\'e}mi},
  journal={Neural Information Processing Systems (NeurIPS)},
  year={2025}
}

@article{jacobsen2025staying,
  title={Staying on the manifold: Geometry-Aware Noise Injection},
  author={Jacobsen, Albert Kj{\o}ller and Gegenfurtner, Johanna Marie and Arvanitidis, Georgios},
  journal={Northern Lights Deep Learning Conference (NLDL)},
  year={2026}
}

@article{jazbec2025generative,
  title={Generative Uncertainty in Diffusion Models},
  author={Jazbec, Metod and Wong-Toi, Eliot and Xia, Guoxuan and Zhang, Dan and Nalisnick, Eric and Mandt, Stephan},
  journal={Uncertainty in Artificial Intelligence (UAI)},
  year={2025}
}

@book{Lee00,
  author = {Lee, John M.},
  
  title = {Introduction to Smooth Manifolds},
publisher = {Springer New York},
  year = 2000
}

@article{sohldickstein2015deepunsupervisedlearningusing,
      title={Deep Unsupervised Learning using Nonequilibrium Thermodynamics}, 
      author={Jascha Sohl-Dickstein and Eric A. Weiss and Niru Maheswaranathan and Surya Ganguli},
        journal={arXiv:1503.03585},
year = 2015
}

@misc{flowsanddiffusions2025,
    author       = {Peter Holderrieth and Ezra Erives},
    title        = {Introduction to Flow Matching and Diffusion Models},
    year         = {2025},
    url          = {https://diffusion.csail.mit.edu/}
  }

@article{ho2020denoising,
  title={Denoising diffusion probabilistic models},
  author={Ho, Jonathan and Jain, Ajay and Abbeel, Pieter},
  journal={Neural Information Processing Systems (NeurIPS)},
  year={2020}
}

@book{Gudmundsson2025_Riemann,
  author       = {Sigmundur Gudmundsson},
  title        = {An Introduction to Riemannian Geometry},
  institution  = {Lund University},
  type         = {Lecture Notes in Mathematics},
  version      = {1.433 (24 April 2025)},
  year         = {2025},
  url          = {https://www.matematik.lu.se/matematiklu/personal/sigma/Riemann.pdf},
  }

@article{kamkari2024geometricviewdatacomplexity,
      title={A Geometric View of Data Complexity: Efficient Local Intrinsic Dimension Estimation with Diffusion Models}, 
      author={Hamidreza Kamkari and Brendan Leigh Ross and Rasa Hosseinzadeh and Jesse C. Cresswell and Gabriel Loaiza-Ganem},
      year={2024},
      journal={arxiv:2406.03537}
}

@inproceedings{yoon2023diffusion,
  title={Diffusion probabilistic models generalize when they fail to memorize},
  author={Yoon, TaeHo and Choi, Joo Young and Kwon, Sehyun and Ryu, Ernest K},
  booktitle={ICML 2023 workshop on structured probabilistic inference \& generative modeling},
  year={2023}
}

@article{dormand1980family,
  title={{A family of embedded Runge-Kutta formulae}},
  author={Dormand, John R. and Prince, Peter J.},
  journal={Journal of computational and applied mathematics},
  volume={6},
  number={1},
  pages={19--26},
  year={1980},
  publisher={Elsevier}
}

@article{biroli2024dynamical,
  title={Dynamical regimes of diffusion models},
  author={Biroli, Giulio and Bonnaire, Tony and De Bortoli, Valentin and M{\'e}zard, Marc},
  journal={Nature Communications},
  volume={15},
  number={1},
  pages={9957},
  year={2024},
  publisher={Nature Publishing Group UK London}
}

@article{smith2021origin,
  title={On the origin of implicit regularization in stochastic gradient descent},
  author={Smith, Samuel L and Dherin, Benoit and Barrett, David GT and De, Soham},
  journal={arXiv preprint arXiv:2101.12176},
  year={2021}
}

@article{song2021maximum,
  title={Maximum likelihood training of score-based diffusion models},
  author={Song, Yang and Durkan, Conor and Murray, Iain and Ermon, Stefano},
  journal={Advances in neural information processing systems},
  volume={34},
  pages={1415--1428},
  year={2021}
}

@inproceedings{gao2025diffusionmeetsflow,
  author = {Gao, Ruiqi and Hoogeboom, Emiel and Heek, Jonathan and Bortoli, Valentin De and Murphy, Kevin P. and Salimans, Tim},
  title = {Diffusion Meets Flow Matching: Two Sides of the Same Coin},
  year = {2024},
  url  = {https://diffusionflow.github.io/}
}

@inproceedings{khrulkov2020hyperbolic,
  title={Hyperbolic image embeddings},
  author={Khrulkov, Valentin and Mirvakhabova, Leyla and Ustinova, Evgeniya and Oseledets, Ivan and Lempitsky, Victor},
  booktitle={Proceedings of the IEEE/CVF conference on computer vision and pattern recognition},
  pages={6418--6428},
  year={2020}
}

@inproceedings{atigh2022hyperbolic,
  title={Hyperbolic image segmentation},
  author={Atigh, Mina Ghadimi and Schoep, Julian and Acar, Erman and Van Noord, Nanne and Mettes, Pascal},
  booktitle={Proceedings of the IEEE/CVF conference on computer vision and pattern recognition},
  pages={4453--4462},
  year={2022}
}

@article{gu2023memorization,
  title={On Memorization in Diffusion Models},
  author={Gu, Xiangming and Du, Chao and Pang, Tianyu and Li, Chongxuan and Lin, Min and Wang, Ye},
  journal={Transactions on Machine Learning Research (TMLR)},
  year={2025},
}

@article{kadkhodaie2023generalization,
  title={Generalization in diffusion models arises from geometry-adaptive harmonic representations},
  author={Kadkhodaie, Zahra and Guth, Florentin and Simoncelli, Eero P and Mallat, St{\'e}phane},
  journal={arXiv preprint arXiv:2310.02557},
  year={2023}
}

@inproceedings{chen2024towards,
  title={Towards memorization-free diffusion models},
  author={Chen, Chen and Liu, Daochang and Xu, Chang},
  booktitle={Proceedings of the IEEE/CVF Conference on Computer Vision and Pattern Recognition},
  pages={8425--8434},
  year={2024}
}

@article{bonnaire2025diffusion,
  title={Why Diffusion Models Don't Memorize: The Role of Implicit Dynamical Regularization in Training},
  author={Bonnaire, Tony and Urfin, Rapha{\"e}l and Biroli, Giulio and M{\'e}zard, Marc},
  journal={Neural Information Processing Systems (NeurIPS)},
  year={2025}
}

@inproceedings{pizzi2022self,
  title={A self-supervised descriptor for image copy detection},
  author={Pizzi, Ed and Roy, Sreya Dutta and Ravindra, Sugosh Nagavara and Goyal, Priya and Douze, Matthijs},
  booktitle={Proceedings of the IEEE/CVF Conference on Computer Vision and Pattern Recognition},
  pages={14532--14542},
  year={2022}
}

@article{krizhevsky2009learning,
  title={Learning multiple layers of features from tiny images},
  author={Krizhevsky, Alex and Hinton, Geoffrey and others}, year={2009},
url = {https://www.cs.utoronto.ca/~kriz/learning-features-2009-TR.pdf}


}

@article{roy2024reparameterization,
  title={Reparameterization invariance in approximate Bayesian inference},
  author={Roy, Hrittik and Miani, Marco and Ek, Carl Henrik and Hennig, Philipp and Pf{\"o}rtner, Marvin and Tatzel, Lukas and Hauberg, S{\o}ren},
  journal={Advances in Neural Information Processing Systems},
  volume={37},
  pages={8132--8164},
  year={2024}
}

@article{liu2024generative,
  title={Generative AI model privacy: a survey},
  author={Liu, Yihao and Huang, Jinhe and Li, Yanjie and Wang, Dong and Xiao, Bin},
  journal={Artificial Intelligence Review},
  volume={58},
  number={1},
  pages={33},
  year={2024},
  publisher={Springer}
}

@inproceedings{songdenoising,
  title={Denoising Diffusion Implicit Models},
  author={Song, Jiaming and Meng, Chenlin and Ermon, Stefano},
  booktitle={International Conference on Learning Representations (ICLR)},
  year={2021}
}

@article{heusel2017gans,
  title={Gans trained by a two time-scale update rule converge to a local nash equilibrium},
  author={Heusel, Martin and Ramsauer, Hubert and Unterthiner, Thomas and Nessler, Bernhard and Hochreiter, Sepp},
  journal={Advances in neural information processing systems},
  volume={30},
  year={2017}
}

@article{lanczos1950iteration,
  title={An iteration method for the solution of the eigenvalue problem of linear differential and integral operators},
  author={Lanczos, Cornelius},
  journal={Journal of research of the National Bureau of Standards},
  volume={45},
  number={4},
  pages={255--282},
  year={1950}
}

@book{villani2008optimal,
  title={Optimal transport: old and new},
  author={Villani, C{\'e}dric and others},
  volume={338},
  year={2008},
  publisher={Springer}
}

@article{peyre2019computational,
  title={Computational optimal transport: With applications to data science},
  author={Peyr{\'e}, Gabriel and Cuturi, Marco and others},
  journal={Foundations and Trends{\textregistered} in Machine Learning},
  volume={11},
  number={5-6},
  pages={355--607},
  year={2019},
  publisher={Now Publishers, Inc.}
}

\newpage
\appendix
\onecolumn
\section{Essential Facts on Geodesics}\label{app:geodesics}
In this section, we will explain a few concepts we use when constructing the Riemannian Laplace approximation \ref{subsec:RLA}.

The Christoffel symbols describe how much basis vectors evolve along a manifold, and are useful for computations, including the geodesic equation \eqref{eq:geodesic-equation}.
We have the following formal definition.
\begin{definition}\cite{Gudmundsson2025_Riemann}
    Let $(\mathcal{M},g)$ be a Riemannian manifold with Levi-Civita connection $\nabla.$ Further let $(U,x)$ be a local chart on $\mathcal{M}$ and put $X_i=\frac{\partial}{\partial x_i}\in\mathcal{C}^\infty(TU),$ so that $\{X_1,\dots,X_m\}$ is a local frame for $T\mathcal{M}$ on $U.$ Then we define the Christoffel symbols $\Gamma_{ij}^k:U\rightarrow\rn$ of the connection $\nabla$ with respect to $(U,x)$ by 
    $$\nabla_{X_i}X_j=\sum_{k=1}^m\Gamma_{ij}^k\cdot X_k.$$
     Let $\mathbf{G}$ denote the matrix form of the metric $g,$ that is $\mathbf{G}_{ij}=g(X_i,X_j)$. Then, $\Gamma_{ij}^k$ is given by
    $$\Gamma_{ij}^k=\frac{1}{2}\sum_{l=1}^m\left(\mathbf{G}^{-1}\right)_{kl}\left\{\frac{\partial \mathbf{G}_{jl}}{\partial x_i}+\frac{\partial \mathbf{G}_{li}}{\partial x_j}-\frac{\partial \mathbf{G}_{ij}}{\partial x_l}\right\}.$$
   \end{definition}

This formula lets us compute the Christoffel symbols easily.

We also mention the exponential map. It essentially wraps the tangent space at a given point around the manifold. In two dimensions this is easy to picture: take an apple or a bagel, whose surfaces can be approximated by a sphere or a torus, respectively. If you stick aluminium foil to one particular point, you will be able to smoothly fit a small patch of the foil around the point without it wrinkling or tearing. Similarly, the exponential map takes vectors from a subset of the tangent space at the chosen point and maps them to the manifold, specifically by evaluating the geodesics with the given initial velocity.
\begin{definition}\cite{Gudmundsson2025_Riemann}
Let $(\mathcal{M},g)$ be an $m$-dimensional Riemannian manifold, $p\in\mathcal{M}$ and 
$$S_p^{m-1}=\{\bv\in T_p\mathcal{M} \ | \ g_p(\bv,\bv)=1\}.$$ For every $\bv\in S_p^{m-1}$ let $\bgamma_\bv:(-a_\bv,b_\bv)\rightarrow\mathcal{M}$ be the maximal geodesic such that $a_\bv,b_\bv\in\rn^+,$ $\bgamma(0)=p,$ and $\dot{\bgamma}(0)=\bv.$
It can be shown that $\epsilon_p=\inf\{a_\bv,b_\bv \ | \ \bv\in S_p^{m-1}\}$ is positive, and thus the open ball $$\mathcal{B}_{\epsilon_p}^m(0)=\{\bv\in T_p\mathcal{M} \ | \ g_p(\bv,\bv)<\epsilon_p\}$$ is non-empty.
The exponential map $\operatorname{Exp}_p:\mathcal{B}_{\epsilon_p}^m(0)\rightarrow\mathcal{M}$ at $p$ is given by
\begin{eqnarray}
    \operatorname{Exp}_{p}(\boldsymbol{v})=\begin{cases}
        \gamma_{\frac{\boldsymbol{v}}{\lVert\boldsymbol{v}\rVert}}\left(\lVert\boldsymbol{v}\rVert\right)& \text{if} \ \  \boldsymbol{v}\in \mathcal{B}_{\epsilon_p}^m(0)\backslash \{\boldsymbol{0}\},\\
        p & \text{if} \ \  \boldsymbol{v}=\boldsymbol{0}.
    \end{cases}
\end{eqnarray}
\end{definition}

\section{Details from the Theory Section}\label{app:theory}
\subsection{Proof of Theorem \ref{th:margin}}
\begin{proof}\label{pr:margin}
We assume that $\widehat{\bx}$ and $\bx'=\widehat{\bx}+\boldsymbol{\delta}$ lie on the straight line between $\bx^{(1)}(\widehat{\bx})$ and $\bx^{\left(2\right)}(\widehat{\bx}),$ such that  $\widehat{\bx}$ is closer to $\bx^{(1)}(\widehat{\bx})$ than $\bx'$ is.
Hence, we have that 
    \begin{eqnarray}\label{eq:distance-split}
        \lVert\widehat{\bx'}-\bx^{\left(1\right)}(\widehat{\bx})\rVert&=&\lVert\widehat{\bx}-\bx^{\left(1\right)}(\widehat{\bx})\rVert+\lVert\boldsymbol{\delta}\rVert,\\
        \lVert\widehat{\bx'}-\bx^{\left(2\right)}(\widehat{\bx})\rVert&=&\lVert\widehat{\bx}-\bx^{\left(2\right)}(\widehat{\bx})\rVert-\lVert\boldsymbol{\delta}\rVert.\label{eq:distances2}
    \end{eqnarray}
For $\widehat{\bx}'$ to not be memorised, it cannot be too close to either $\bx^{\left(1\right)}(\widehat{\bx})$ or $\bx^{\left(2\right)}(\widehat{\bx}).$ Specifically, we need
\begin{eqnarray}\label{eq:necessary}
    \lVert \widehat{\bx}' - \bx^{\left(1\right)}(\widehat{\bx})\rVert &>& \sqrt{c} \lVert \widehat{\bx}' - \bx^{\left(2\right)}(\widehat{\bx})\rVert,\\
    \lVert \widehat{\bx}' - \bx^{\left(2\right)}(\widehat{\bx})\rVert &>& \sqrt{c} \lVert \widehat{\bx}' - \bx^{\left(1\right)}(\widehat{\bx})\rVert^2.\label{eq:necessary2}
\end{eqnarray}
Plugging now Equations \eqref{eq:distance-split} and \eqref{eq:distances2} into \eqref{eq:necessary} and \eqref{eq:necessary2}, and moving terms around, we obtain the desired bound.
\end{proof}

\subsection{Comparing Displacement Under the Riemannian and Euclidean Perturbation.}\label{app:subsec:displacement}
We will provide a standard argument which shows that $||\btheta^{s}_{\text{E}}-\btheta^\ast||\geq||\btheta^{s}_{\text{R}}-\btheta^\ast||.$
\begin{proof}
    Recall that geodesics have unit speed, hence $$\lVert\dot{\bgamma}(t)\rVert\equiv\lVert\dot{\bgamma}(0)\rVert=\lVert\mathbf{J}_h(\btheta^*)\bv\rVert.$$ Since $\btheta^*$ minimises $\mathcal{L}(\btheta),$ we have that $\mathbf{J}_h(\btheta^*)=\begin{bmatrix}\mathbb{I}_K & \boldsymbol{0}\end{bmatrix}^\top.$ Thus, $\lVert\dot{\bgamma}(t)\rVert\equiv \lVert\bv\rVert.$
It now follows that $\bgamma(t)_{[0,1]}$ has arc length $\lVert\bv\rVert$:
$$\int_0^1\lVert\dot{\bgamma}(t)\rVert\text{d}t=\int_0^1\lVert\bv\rVert\text{d}t=\lVert\bv\rVert.$$
Using the definition of $\mathbf{G}(\btheta)$ from Equation \eqref{eq:monge-metric}, we now observe that 
$$\lVert\dot{\balpha}(t)\rVert_{\mathbf{G}(\btheta)}^2\equiv\lVert\dot{\balpha}(t)\rVert^2+\underset{\geq0}{\underbrace{\dot{\balpha}(t)^\top\nabla_\btheta\mathcal{L}(\btheta)\nabla_\btheta\mathcal{L}(\btheta)^\top\dot{\balpha}(t)}},$$ and consequently $\lVert\dot{\balpha}(t)\rVert\leq\lVert\dot{\balpha}(t)\rVert_{\mathbf{G}(\btheta)}. $
We now evaluate the arc length of $\balpha(t)_{[0,1]}$:
$$\int_0^1\lVert\dot{\balpha}(t)\rVert\text{d}t\leq\int_0^1\lVert\dot{\balpha}(t)\rVert_{\mathbf{G}(\btheta)}\text{d}t=\int_0^1\lVert\dot{\bgamma}(t)\rVert\text{d}t=\lVert\bv\rVert,$$ with equality if and only if $\balpha(t)$ is a straight line. As straight lines are the shortest paths in the flat Euclidean space, $\balpha(1)=\btheta^{s}_{\text{R}}$ cannot be further away from $\btheta^\ast$ than $\btheta^\ast+\bv=\btheta^{s}_{\text{E}}.$
\end{proof}

\subsection{Approximation of the Curve}\label{app:subsec:curve}
Assume $\balpha(t):[0,1]\subseteq\rn\rightarrow\Theta$ is a smooth curve in $\Theta.$ We wish to approximate $\balpha(1).$

We choose $n\in\mathbb{N}_{>0},$ and take discrete time steps of size $\epsilon=\frac{1}{n}.$ At each point in time $t,$ we take a step in the direction of $\dot{\balpha}(t),$ and thus
\begin{equation}\label{eq:approx}
    \balpha(t+\epsilon)\approx\balpha(t)+\epsilon\dot{\balpha}(t).
\end{equation}

After $n$ such steps we obtain
\begin{eqnarray*}
    \balpha(1)&\approx&\balpha(0)+\epsilon\dot{\balpha}(0)+\epsilon\dot{\balpha}(\epsilon)+\epsilon\dot{\balpha}(2\epsilon)+\dots+\epsilon\dot{\balpha}((n-1)\epsilon)\\
    &=&\balpha(0)+\epsilon\cdot\sum_{j=0}^{n-1}\dot{\balpha}(j\epsilon).
\end{eqnarray*}
Similar to Equation \eqref{eq:approx}, we can update the second derivatives by letting 
\begin{equation}\label{eq:approx-first}
    \dot{\balpha}(t+\epsilon)\approx\dot{\balpha}(t)+\epsilon\Ddot{\balpha}(t).
\end{equation}

This yields
\begin{eqnarray*}
    \balpha(1)
    &\approx&
    \balpha(0)+\epsilon\dot{\balpha}(0)+\epsilon\underset{=\dot{\balpha}(\epsilon)}{\left(\underbrace{\dot{\balpha}(0)+\epsilon\Ddot{\balpha}(0)}\right)}+\epsilon\underset{\dot{\balpha}(2\epsilon)}{\left(\underbrace{\dot{\balpha}(\epsilon)+\epsilon\Ddot{\balpha}(\epsilon)}\right)}+\dots+\epsilon\underset{\dot{\balpha}((n-1)\epsilon)}{\left(\underbrace{\dot{\balpha}((n-2)\epsilon)+\epsilon\Ddot{\balpha}((n-2)\epsilon)}\right)}.
    \end{eqnarray*}
We can now apply Equation \eqref{eq:approx-first} on all the first derivatives $\dot{\balpha}(j\epsilon)$ for $j\neq0$ again, and see that in total, $\epsilon\dot{\balpha}(0)$ appears $n$ times. Further, for each $j=0,\dots,n-2,$ the term $\epsilon^2\Ddot{\balpha}(j\epsilon)$ will appear $n-1-j$ times. 
We finally obtain \begin{equation}
    \balpha(n\epsilon)\approx\balpha(0)+\dot{\balpha}(0)+\epsilon^2\sum_{j=0}^{n-2}(n-1-j)\Ddot{\balpha}(j\epsilon).
\end{equation}
Our approximation is valid up to $\mathcal{O}(\epsilon^2).$ 
Equivalent derivations can be found in standard textbooks on ordinary differential equations, and a similar computation is done by \citet{smith2021origin}.

\section{Details of the Toy Experiments}
\label{app:experiments}

We provide the associated code to all experiments in the repository: \href{https://github.com/albertkjoller/geometric-ml/tree/main/reducing-memorisation-in-generative-models}{\texttt{github.com/albertkjoller/geometric-ml}}.

\subsection{Toy Experiment in 1D}

We adopt a Bayesian treatment of the parameters of $u_{\btheta}$, by defining the likelihood term in Equation \eqref{eq:lpost} as the flow matching loss $\mathcal{L}_{\operatorname{FM}}\left(\btheta\right)$ and a uniform prior over the weights. This corresponds to placing the Laplace approximation at the maximum likelihood estimate found from training the model with gradient descent. We consider the dataset $\mathcal{D}$ as a fixed collection of data-noise pairings and equidistant time samples, i.e. $\mathcal{D}=\{t_{i}=\frac{i}{N}, \bx_0^i, \bx_\ast^i\}_{n=1}^N$, for ensuring a deterministic loss.

We sample $S=1000$ model realisations from the Euclidean approximate posterior $\btheta_{\text{E-LA}}^s \sim q\left(\btheta|\mathcal{D}\right)$ and additionally use these as initial velocities for obtaining $S=1000$ samples from the Riemannian approximate posterior $\btheta_{\text{R-LA}}^s\sim q_{\mathcal{M}}\left(\btheta|\mathcal{D}\right)$. For both approximate posterior methods, we compute the velocity field over a space-time grid for each of the model realisations $\btheta_s$. We visualise the variation over the associated velocity fields in Figure \ref{fig:velocity-field} (\textit{right}).

Next, we sample $N=1000$ points from the base distribution and push these through the generator associated to every model realisation. This gives us a set of $N\times S$ generated points for each approximate posterior. We formalise this as:
\begin{align}
    \{\widehat{x}_{\text{E}}^i\}_{i=1}^{N\times S} &= \{ \ g_{\btheta_s}(x_0^i) \mid x_0^i \sim p_0,\ \btheta_\text{E}^s \sim q\left(\btheta|\mathcal{D}\right) \,\}_{i=1, s=1}^{N, S} \\
    \{\widehat{x}_{\text{R}}^i\}_{i=1}^{N\times S} &= \{ \ g_{\btheta_s}(x_0^i) \mid x_0^i \sim p_0,\ \btheta_\text{R}^s \sim q_{\mathcal{M}}\left(\btheta|\mathcal{D}\right) \,\}_{i=1, s=1}^{N, S}
\end{align}
and visualise the memorisation ratio for various distance thresholds $c$ in Figure \ref{fig:results} along with the distributions of generated samples, $p\left(\widehat{x}\right), p_{\text{E}}\left(\widehat{x}\right)$ and $p_{\text{R}}\left(\widehat{x}\right)$. We report the $\operatorname{KL}$-divergence to the target distribution $p_\ast$ as a measure of generalisation.

\subsubsection{Comparing Bias and Variance of the Generated Distributions}

\begin{figure*}[tb!]
    \centering
    \includegraphics[width=0.95\linewidth]{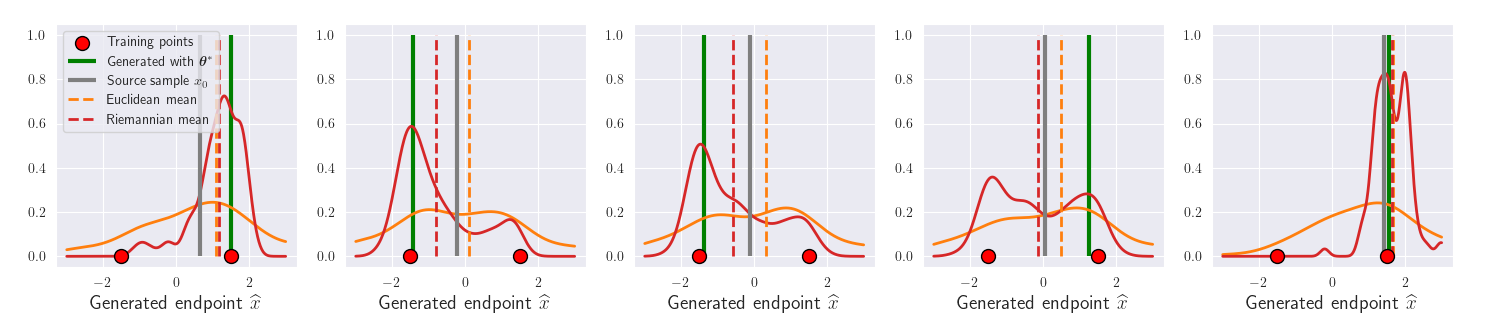}
    \caption{For a fixed initial value $\bx_0,$ (\grayline) we plot the distribution of endpoints of the trajectories when following the $S=1000$ vector fields under the Euclidean Laplace perturbation (\orangeline) and the Riemann Laplace perturbation (\redline), respectively. The green line (\greenline) demonstrates the endpoint of the original vector field that memorises the training points (\reddot). We use kernel density estimation to visualise the spread of the endpoints. We see that the Euclidean method has a bias and higher variance, whereas the Riemannian method finds the two modes and is often closer to the MAP endpoint.}
    \label{fig:bias}
\end{figure*}
In Figure \ref{fig:bias} we fix $N=1$ sample of $\bx_0$ and plot the mean and variance for the distribution of generated data samples obtained from the $S=1000$ generators, as argued in Subsection \ref{sub:perturbations-impact}. We observe that the Euclidean distribution has a bias and higher variance than the Riemannian distribution, as also seen in Figure \ref{fig:velocity-field}. Both findings indicate that the Riemannian paths stay closer to the MAP solution, as theoretically argued in Subsection \ref{sub:riemannian-paths}.

\subsection{Toy Experiment in 2D}\label{app:subsec:2d}

We consider the base distribution to be an isotropic Gaussian in $\rn^2$, i.e. $p_0\sim\mathcal{N}\left(\boldsymbol{0}, \mathbb{I}_2\right)$, and the target distribution $p_\ast$ to be a GMM with equally weighted means at $\bmu_1 = [1.5, 1.5]^\top$ and  $\bmu_2 = [-1.5, -1.5]^\top$, both with covariance $\bSigma = 0.2\cdot\mathbb{I}_2$. We train a network to memorise six points $\boldsymbol{P}_i$ in total, three at each mode, such that for all $i,j,$ $$\min_{k=1,2}\left\{\lVert\boldsymbol{P}_i-\bmu_k\rVert\right\}=\min_{k=1,2}\left\{\lVert\boldsymbol{P}_j-\bmu_k\rVert\right\}.$$ 
That is, the points $\boldsymbol{P}_i$ all have the same distance to whichever $\bmu_k$ they are closer to. A visualisation is shown in Figure \ref{fig:training_2dflow}.

\begin{figure*}[tb!]
    \centering
    \includegraphics[width=0.3\linewidth]{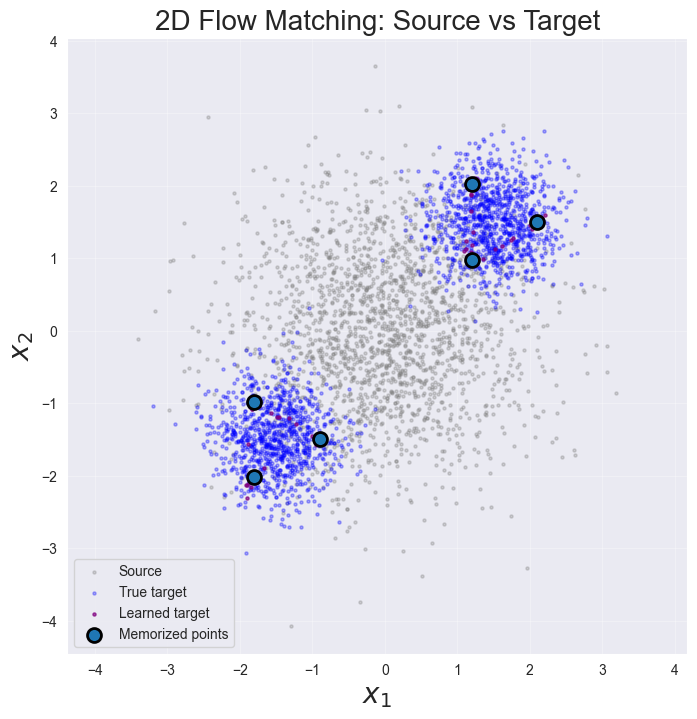}
    \caption{The source distribution $p_0$ samples are shown in grey, and the target distribution $p_\ast$ (\kleinbluedot) samples in blue. The model learns to memorise the equidistant points $\boldsymbol{P}_i$ (\bluedot) from each mixture mode. In purple (\purpledot), we show samples from the learnt distribution.}
    \label{fig:training_2dflow}
\end{figure*}

We sample the base distribution as in the 1D toy example in Section \ref{sec:experiments} and choose the initial velocities to be along the eigenvector of the Hessian $\mathbf{H}_{\mathcal{L}}\left(\btheta\right)$ corresponding to the highest eigenvalue, which intuitively corresponds to the most impactful direction to travel along the loss manifold. We explore different scaling levels $\eta$ of the initial velocity vectors. The trajectories from $p_0$ to $p_\ast$ given by the Euclidean Laplace and the Riemannian perturbation are shown in Figure \ref{fig:2d_higheigvec_trajectories}.

\begin{figure*}[tb!]
    \includegraphics[width=0.98\linewidth]{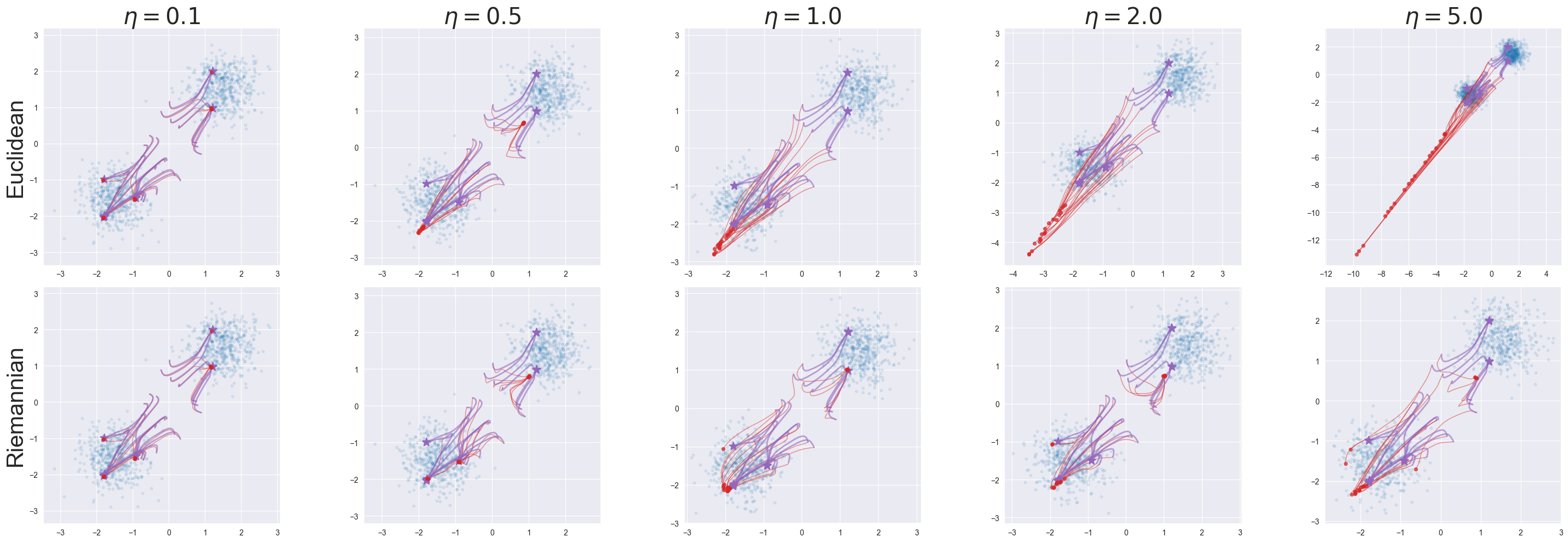}
    \hfill
    \caption{The blue points denote samples from the true target distribution $p_\ast$, and the purple lines are trajectories from the base distribution $p_0$. The endpoints of the learnt model (MAP estimate) are represented by (\purplestar). The red lines correspond to the trajectories from the perturbed models (top row corresponding to linear and bottom to the geodesic perturbations), ending at $\widehat{\bx}=g_{\btheta^s}(\bx_0), \bx_0\sim p_0$ (\reddot). As the magnitude of the initial velocity $\bv$ increases, the geodesic trajectories remain within favourable regions of the target space, whereas the Euclidean samples render trajectories that fall into low-density regions of $p_\ast$.}
    \vspace{-9pt}
    \label{fig:2d_higheigvec_trajectories}
\end{figure*}

Furthermore, we then sample 3000 points from $p_0$ and examine the endpoints following the Euclidean and Riemannian Laplace approximations, and MAP models by quantifying the empirical Wasserstein distribution between samples from the true target $p_\ast$ and the generated samples $g_{\btheta^s}(\bx_0), \bx_0\sim p_0$. 
Recall the definition of the Wasserstein distance between two probability measures. 
\begin{definition}\cite{villani2008optimal}
    Let $\boldsymbol{\mu}$ and $\boldsymbol{\nu}$ be probability measures on a Polish metric space $(\mathcal{X},d).$ Then the Wasserstein 1-distance between $\boldsymbol{\mu}$ and $\boldsymbol{\nu}$ is given by 
    \begin{equation}
        W_1(\boldsymbol{\mu},\boldsymbol{\nu})=\inf_{\pi\in\Pi(\boldsymbol{\mu},\boldsymbol{\nu})}\int_{\mathcal{X}}d(x,y)d\pi(x,y)=\inf\left\{\mathbb{E}d(X,Y), \ X\sim\boldsymbol{\mu}, \ Y\sim\boldsymbol{\nu}\right\}.
    \end{equation}
\end{definition}
Following \citet{peyre2019computational}, this can be adapted to empirical measures $\boldsymbol{\mu}=\frac{1}{n}\sum_{i=1}^n\delta_{\bx_i}$ and $\boldsymbol{\nu}=\frac{1}{m}\sum_{j=1}^m\delta_{\boldsymbol{y}_j}$ to obtain the Wasserstein $1$-distance
\begin{equation}
W_1(\boldsymbol{\mu},\boldsymbol{\nu}) = \inf_{\gamma \in \Pi(\boldsymbol{\mu},\boldsymbol{\nu})}\left\{ \sum_{i=1}^{n} \sum_{j=1}^{m} \gamma_{ij} \lVert\bx_i - \boldsymbol{y}_j\rVert, \quad \sum_{j=1}^{m} \gamma_{ij} = \frac{1}{n}, \quad \sum_{i=1}^{n} \gamma_{ij} = \frac{1}{m}, \quad \gamma_{ij} \geq 0\right\}
\end{equation}
by choosing $d(x,y)=\lVert x-y\rVert_2$ to be the standard Euclidean distance on $\rn^2.$

\begin{figure*}[tb]
    \includegraphics[width=0.97\linewidth]{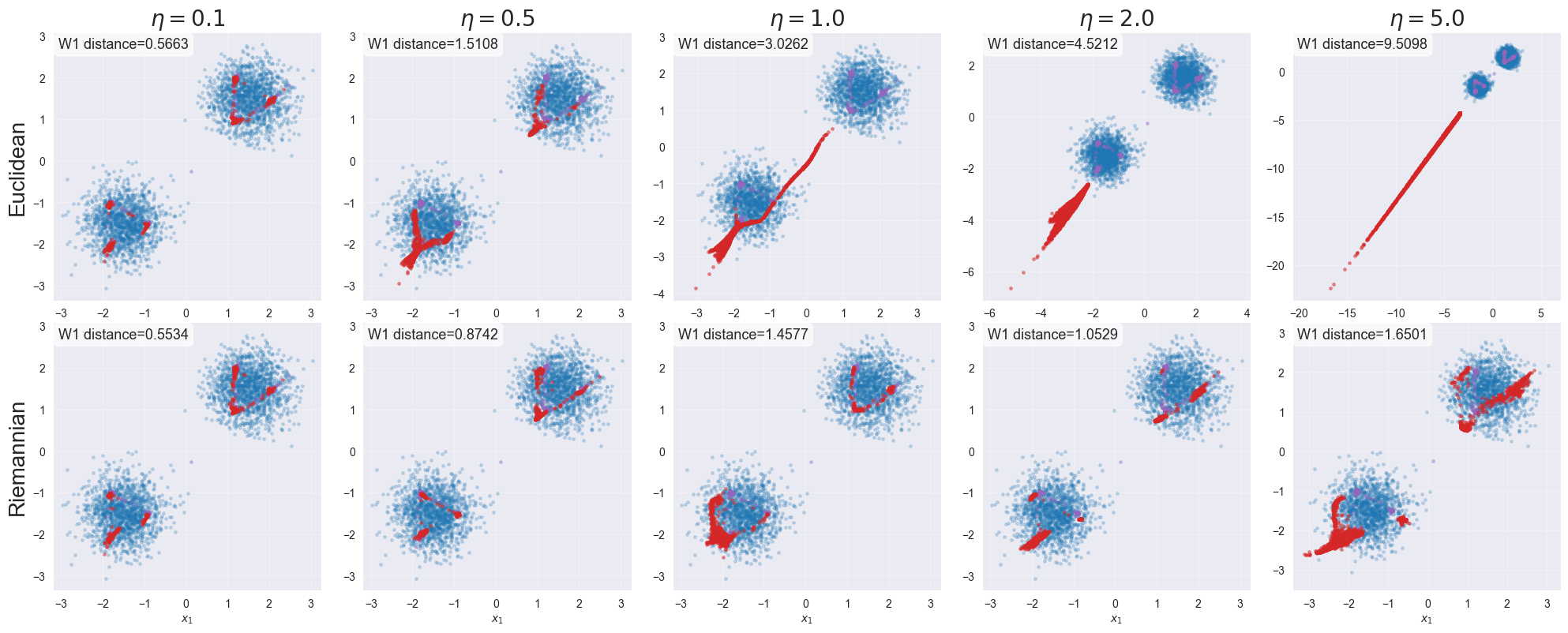}
    \hfill
    \caption{We compare 3000 points $\widehat{\bx}=g_{\btheta^s}(\bx_0), \bx_0\sim p_0$ (\reddot) generated by the models sampled with the Euclidean Laplace approximation (top row) and the Riemannian Laplace approximation (bottom row) and show them in superposition with the samples generated by the MAP model $g_{\btheta^*}$ (\purpledot). The distribution of the points generated by the MAP model and the target distribution are at a Wasserstein 1-distance of $0.4348$. We see that by increasing initial velocity scaling, points from the Euclidean setting fall in regions of low mass of $p_\ast$, and their distribution increases in Wasserstein distance to the $p_\ast$. Using the Riemannian Laplace approximation, the effect is remarkably milder.}
    \label{fig:W1_2dflow}
\end{figure*}

\newpage
\section{Laplace Approximations for Generative Image Models}
\label{app:image-experiments}

In this section, we show how we analysed memorisation exhibited by generative image models. We summarise our approach in Algorithm \ref{alg:riemannian-bayesian-inference}, and provide details in the following subsections.

\subsection{Model Details}
We use a time-conditioned 2D U-Net architecture with residual connections and a self-attention bottleneck, as commonly used in diffusion models \cite{ho2020denoising}\footnote{Implementation taken from \url{https://huggingface.co/google/ddpm-cifar10-32}}. The network follows an encoder–decoder U-Net design with four downsampling and four upsampling stages based on $3\times3$ convolutions with padding and striding of shape $(1\times 1)$. The stages are connected by skip connections. The network processes RGB inputs, conditions all residual blocks on a learned 512-dimensional time step embedding, and uses \texttt{GroupNorm} with \texttt{SiLU} activations throughout.

We found that a pre-trained model trained on the full dataset showed no memorisation. Therefore we choose a subset of $1000$ randomly sampled images from CIFAR-10, normalised to the range $[-1, 1]$ and train the model for up to $5000$ epochs using a learning rate of $2\cdot 10^{-4}$ using the denoising diffusion probabilistic model (DDPM) scheduler with $1000$ steps. We store checkpoints every $50$th epoch and later generate images with $50$ DDIM steps, using the model that we deem to balance memorisation and generalisation best, defined by the Frechét Inception Distance (FID) \citep{heusel2017gans}.

\begin{algorithm*}[tb!]
    \caption{Riemannian Bayesian Inference for Generative Models in High Dimensions}\label{alg:riemannian-bayesian-inference}

    \begin{algorithmic}[1]
        \STATE {\bfseries Input:} Generator $g_{\btheta}$ with $K$ parameters, twice-differentiable loss $\mathcal{L}(\btheta)$, training data $\mathcal{D}_{\bx}$, batch size $N$, number of perturbations $S$, perturbation strength $t$,  repetitions $B$, number of Lanczos iterations $k\ll K$.
        \vspace{0.3em}
        \STATE {\bfseries Output:} Generated images $\{\widehat{\boldsymbol{x}}_{n,s,b}\}_{n=1,s=1,b=1}^{N,S,B}$
        \vspace{0.7em}
    
        \STATE {\bfseries Step 1: MAP training}
        \STATE \hspace{1em} $\btheta^\ast \gets \arg\min_\btheta \mathcal{L}(\btheta)$ \hfill \COMMENT {Train $g_{\btheta}$ using SGD}
        \vspace{0.5em}
    
        \FOR{$b = 1$ {\bfseries to} $B$}
            \STATE Fix a batch of $N$ training data $\boldsymbol{X}_\ast^{(b)}=\{\bx_{\ast}^n\}_{n=1}^N \sim \mathcal{D}_{\bx}$, noise samples $\boldsymbol{X}_0^{(b)}=\{\bx_{0}^n\}_{n=1}^N$, and time samples $\boldsymbol{t}^{(b)}$
            \STATE Define batch loss: $\mathcal{L}^{(b)}(\btheta) := \mathcal{L}(\btheta \mid \boldsymbol{X}_\ast^{(b)}, \boldsymbol{X}_0^{(b)}, \boldsymbol{t}^{(b)})$
            \vspace{0.5em}
    
            \STATE {\bfseries Step 2: Hessian approximation} \hfill \COMMENT {Compute exact Hessian if feasible}
            \STATE  Approximate Hessian $\mathbf{H}_{\mathcal{L}}:=\nabla_\btheta^2 \mathcal{L}^{(b)}(\btheta^\ast)
                \approx \boldsymbol{Q} \boldsymbol{T} \boldsymbol{Q}^\top$ using $k$ Lanczos iterations.
            \STATE Eigendecompose  $\boldsymbol{T} = \boldsymbol{V} \boldsymbol{\Lambda} \boldsymbol{V}^\top$ and restrict to positive pairs $\boldsymbol{\Lambda}_{+}$ and $\boldsymbol{V}_{+}$
            
            \STATE  Define $q\left(\btheta\right) = \mathcal{N}\left(\btheta, \mathbf{H}_{\mathcal{L}}^{-1}\right)$ using approximation $\mathbf{H}_{\mathcal{L}}^{-1} \approx \boldsymbol{Q} \boldsymbol{V}_{+} \boldsymbol{\Lambda}_{+}^{-1} \boldsymbol{V}_{+}^\top \boldsymbol{Q}^\top$ \hfill \COMMENT {Laplace approximation}
            \vspace{0.5em}

            \STATE {\bfseries Step 3: Parameter perturbation}
            \FOR{$s = 1$ {\bfseries to} $S$}
                \STATE Compute perturbation from the Laplace approximation: $\boldsymbol{v} \gets \boldsymbol{Q} \boldsymbol{V}_{+} \boldsymbol{\Lambda}_{+}^{-1/2} \boldsymbol{V}_{+}^\top \boldsymbol{Q}^\top\boldsymbol{\epsilon}$ \hfill \COMMENT {Sample $\boldsymbol{\epsilon} \sim \mathcal{N}(\boldsymbol{0}_k, \mathbb{I}_k)$}
    
                \STATE Compute parameter perturbation: \vspace{0.1em}
                \STATE \hspace{1em} {\bfseries Euclidean:} \quad $\btheta_{s,b} \gets \btheta^\ast + t\,\boldsymbol{v}$
                \STATE \hspace{1em} {\bfseries Riemannian:} $\bigl(\btheta_{s,b}, \mathcal{L}(\btheta_{s,b})\bigr) \gets 
                \operatorname{Exp}_{h(\btheta^\ast)}(t\cdot\mathbf{J}_h(\btheta^*)\bv)$
                \vspace{0.5em}

                \STATE {\bfseries Step 4: Data generation}
                \STATE Generate $\widehat{\boldsymbol{x}}_{n,s,b} \gets g_{\btheta_{s,b}}(\boldsymbol{x}_0^n)$ from noise samples $\boldsymbol{x}_0^n \in \boldsymbol{X}_0^{(b)}$.
                \vspace{0.4em}
            \ENDFOR
        \ENDFOR
    \end{algorithmic}
\end{algorithm*}

\subsection{Hessian Approximation in High Dimensions}
Recall that the Laplace approximation is defined as a Gaussian $\mathcal{N}(\btheta^\ast, \mathbf{H}^{-1})$ where $\btheta^\ast$ is the MAP and $\mathbf{H}^{-1}:=\mathbf{H}_{\mathcal{L}}^{-1}\left(\btheta^\ast\right)$ is the inverse Hessian at the MAP. It should be noted that the inverse Hessian must be positive definite to be a valid covariance matrix. If so, we can sample from the approximate posterior using a reparameterisation trick:
$$
    \btheta^s = \btheta^\ast + \mathbf{\Sigma}^{1/2} \boldsymbol{\epsilon},  \qquad \boldsymbol{\epsilon} \sim \mathcal{N}\left(\boldsymbol{0}, \mathbb{I}_K\right).
$$
In the context of deep neural networks, we face two problems related to drawing samples like this. First, the Hessian is in practice too large to keep and manipulate in memory. Second, the Hessian of a deep neural network is generally indefinite.

\begin{figure}[tb]
    \centering
    \includegraphics[width=\linewidth]{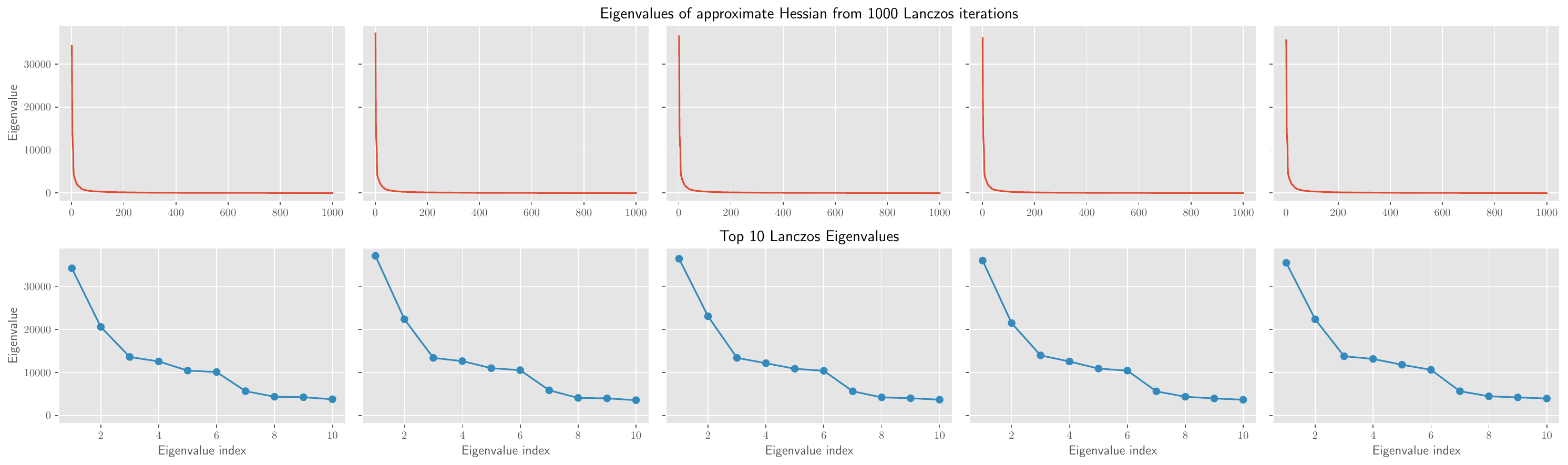}
    \caption{Eigenspectra of approximated Hessians, computed for $5$ different batches of training data.}
    \label{fig:some-hessian-spectra}
\end{figure}

We solve the first problem by using a low-rank approximation of the Hessian, namely $\mathbf{H}\approx \boldsymbol{Q}\boldsymbol{T}\boldsymbol{Q}^\top$ where $\boldsymbol{Q}\in \mathbb{R}^{K\times k}$ and $\boldsymbol{T}\in \mathbb{R}^{k\times k}$ is a tridiagonal matrix. We determine these matrices with the matrix-free Lanczos iteration \citep{lanczos1950iteration} with full re-orthogonalisation using $k=1000$ Lanczos iterations. This approach only requires Hessian-vector products which are efficiently computed using automatic differentiation, and does not require storing the full Hessian in memory. Although the procedure in theory works for approximating the full Hessian of large networks, we note that the full re-orthogonalisation procedure requires keeping $k$ vectors of $K$ dimensions in the matrix $\mathbf{Q}$ in memory, which is not feasible for the full network.

To address the second issue, note that we need the matrix square root of the inverse Hessian for sampling the Laplace posterior. We compute this through an eigendecomposition of the tridiagonal matrix, hence $\boldsymbol{T}=\boldsymbol{V} \boldsymbol{\Lambda}\boldsymbol{V}^\top$ where $\boldsymbol{V}$ contains the eigenvalues of $\boldsymbol{T}$ which approximate the extremal eigenvalues of $\mathbf{H}$. Note, that this allows us to approximate the Hessian and its inverse by
$$
    \mathbf{H} \approx \boldsymbol{Q} \boldsymbol{V} \boldsymbol{\Lambda}\boldsymbol{V}^\top \boldsymbol{Q}^\top,\qquad \mathbf{H}^{-1} \approx \boldsymbol{Q} \boldsymbol{V} \boldsymbol{\Lambda}^{-1}\boldsymbol{V}^\top \boldsymbol{Q}^\top.
$$
As mentioned, the Hessian can easily be indefinite and so can its approximation. We deal with this in the naive way by removing all non-positive eigenvalues and their associated eigenvectors. We denote these by $\boldsymbol{V}_{+}$ and $\boldsymbol{\Lambda}_{+}$, and remark that the resulting Hessian approximation is positive definite. It is well known that every positive definite matrix is invertible, and its inverse is also positive definite. This further implies the existence of the square root of the inverse, in our case we obtain:
$$
    \mathbf{H}_{+}^{-1/2} \approx \boldsymbol{Q} \boldsymbol{V}_{+} \boldsymbol{\Lambda}^{-1/2}_{+}\boldsymbol{V}_{+}^\top \boldsymbol{Q}^\top.
.
$$
This allows us to sample the velocity $$\bv=\boldsymbol{Q} \boldsymbol{V}_{+} \boldsymbol{\Lambda}^{-1/2}_{+}\boldsymbol{V}_{+}^\top \boldsymbol{Q}^\top \boldsymbol{\epsilon}, \quad \boldsymbol{\epsilon}\sim\mathcal{N}(0,\mathbb{I}_K), $$
which is used for computing both the Euclidean and Riemannian perturbation.

\subsection{Generating Images with Perturbed Parameters on Stochastic Loss Manifolds}

We are working with a high-dimensional model and a large dataset, hence we must resort to mini-batching, which means that we use a \textit{stochastic} version of our method. We do so by defining a posterior loss manifold (as described in Subsection \ref{subsec:RLA}) per batch of the training data, $\mathcal{D}_i \sim \mathcal{D}$. We additionally fix the initial samples $\bx_0 \sim p_0$ and the time step samples to ensure that stochasticity only comes from the batching operation.

We construct a manifold for $B$ batches and follow the strategy of approximating the Hessian described above, resulting in $B$ Riemannian Laplace approximations from which we can draw parameter perturbations. We plot the approximate eigenspectra for $5$ such batches in Figure $\ref{fig:some-hessian-spectra}$, using a batch size of 64. Although the spectra differ slightly, they all reflect that the Hessians are dominated by few highly curved directions and many near-zero directions. 

To summarise, we perturb the model parameters by 
\begin{enumerate}
    \item defining the manifold for a data batch,
    \item approximating the Hessian of this manifold,
    \item solving the Riemannian perturbation with $S$ initial velocity samples as explained in the previous subsections,
    \item generating $N$ images for each sample $\btheta^s$.
\end{enumerate}
By repeating for $B$ batches, we get $B\times S \times N$ generated images from which we can do analyses of images generated over all $B$ batches or include uncertainty by studying batch-wise variations.

\subsubsection{Sensitivity Issues in the Geodesic Computation} We observe that solving the geodesic equation is sensitive to the numerical integrator used, as well as the magnitude of the sampled initial velocity vector. First, note that we only get valid results when using an adaptive step-size method, e.g. 5th-order Runge-Kutta method \cite{dormand1980family}, while fixed step size methods like the Euler scheme or the 4th-order Runge-Kutta with a 3/8 rule result in generating psychedelic noise images using the perturbed parameters. We hypothesise that this is due to the adaptive method being able to slow down when approaching the more curved directions. Secondly, for initial vectors of large magnitude, the numerical solution of the geodesic equation can be slow to converge, as the convergence threshold needs to be sufficiently low for the numerical integrator to find a good solution. In the interest of time, we choose to filter out samples displaying this behaviour, by setting a stopping time criteria when solving the geodesic equation.

\subsection{Memorisation Results}

We consider a single batch of $64$ training samples and compute $S=10$ perturbations according to the previous descriptions. For each perturbation we generate $N=100$ images, resulting in a total of $1000$ generated images from the ensemble. We compute the memorisation ratio with respect to the $|\mathcal{D}_i|=1000$ randomly chosen training points, subject to a memorisation constant of $c=1/3$. We extend the memorisation measure to consider the $K=10$ next-closest images from the training set, rather than only the second-closest, since there are image duplicates in the training data. Similarly to the 2D case, we introduce a scaling parameter $\eta$ when sampling the approximate posterior, however, we now do not limit samples to be along the top-eigenvector.

As seen in Figure \ref{fig:memorisation-images}, roughly $90\%$ of the images generated using the MAP model are memorised. We also see the effect of perturbing the model parameters, in the sense that it reduces memorisation, no matter the perturbation strategy. Remarkably, the Riemannian perturbations reduce memorisation more than the Euclidean ones when increasing the magnitude of the initial velocity samples $\eta$. This surprised us, and we believe that it relates to the sensitivity of the numerical integrator, as described in the previous subsection. 

When looking at a random selection of generated images using the MAP model, we clearly see why the memorisation ratio is that high (Figure \ref{fig:image-examples-perturbation}). Using the same initial samples $\bx_0 \sim p_0$, we additionally see that perturbing the parameters using e.g. $\eta=0.5$ results in noisier images, which is exactly a valid way to reduce memorisation according to its definition. Conceptually, one might however argue that the image concepts are still memorised. We hypothesise that this is related to the high memorisation rate of the MAP model, suggesting that it converged to a local minima which might be hard to escape from using the geodesics. 

A MAP solution that memorises effectively learns to recreate the training data. Consequently, all models based on variations of the MAP are also able to recreate the training data, but the difference is that such models additionally introduce new characteristics that were not present in the training data. In the image domain, these characteristics should ideally be structured so as to meaningfully alter the underlying concept of the training image, for example by changing the time of day, the background, or the shape of the pictured object. However, while our approach can generate new images in the vicinity of the training data, it is not currently optimised for images and therefore does not constrain the new samples to lie on the image manifold. We hope to address this issue soon to further improve our method on the image domain.
\begin{figure}[h!]
    \centering
    \includegraphics[width=0.6\linewidth]{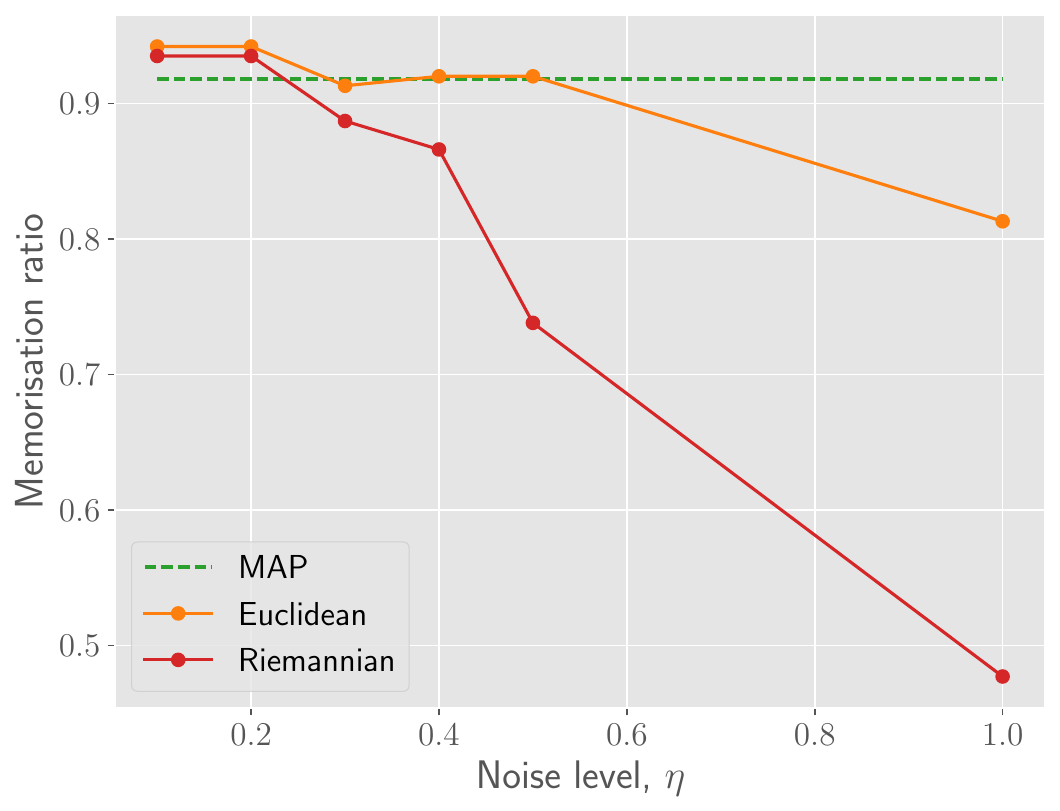}
    \caption{Memorisation reduces by perturbing the model parameters for a generative image model. This happens for both types of Laplace approximations defined from a single batch of training samples.}
    \label{fig:memorisation-images}
\end{figure}

\begin{figure}[h!]
    \centering
    \begin{subfigure}[b]{0.95\textwidth}
        \centering
        \includegraphics[width=\linewidth]{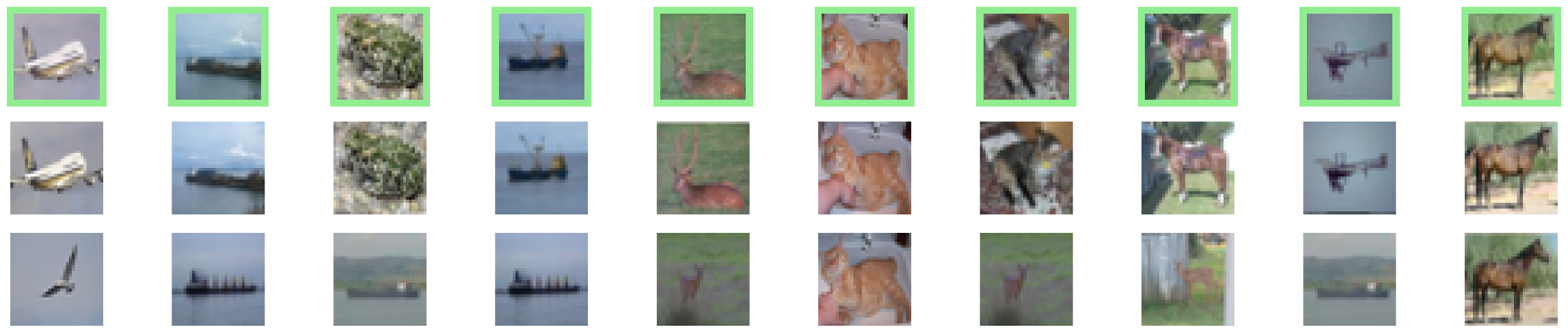}
        \caption{MAP}
    \end{subfigure}
    \begin{subfigure}[b]{0.95\textwidth}
        \centering
        \includegraphics[width=\linewidth]{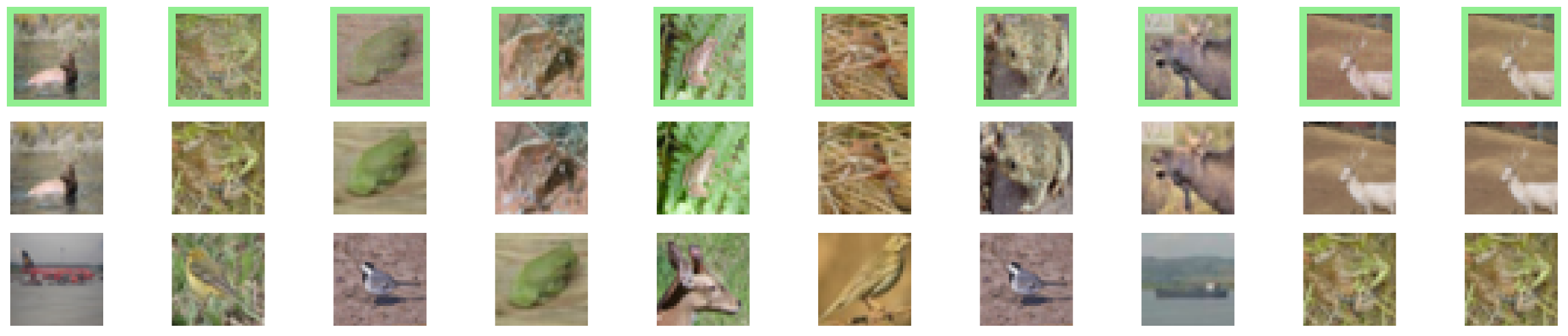}
        \caption{Euclidean, $\eta=0.5$}
    \end{subfigure}
    \begin{subfigure}[b]{0.95\textwidth}
        \centering
        \includegraphics[width=\linewidth]{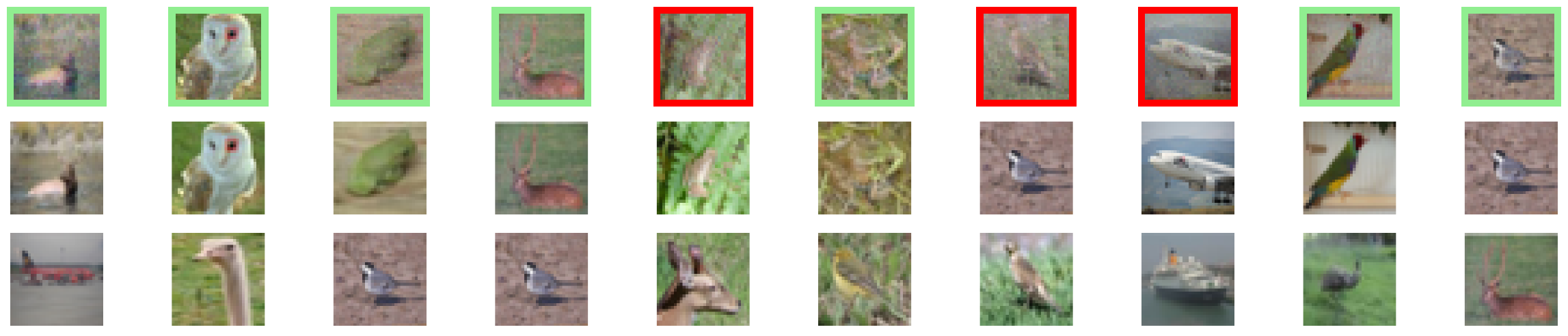}
        \caption{Riemannian, $\eta=0.5$}
    \end{subfigure}
    \caption{We show a random selection of generated images where each column uses the same initial sample $\bx_0\sim p_0$. For each of the three figures the top row corresponds to the generated images. The second and third row corresponds to the closest and second-closest images from the subsampled training set, respectively. Using a memorisation criteria of $c=1/3$ and based on the $K=10$ next-closest neighbours, we indicate whether a generated sample is memorised by framing it with a green box. If it is not memorised, we frame it by a red box. The Riemannian samples are memorised less often than the other approaches, which seems to only be due to added noise on the pixels. We hypothesise that the result depends on properties of the loss at the MAP and hence the MAP memorisation ratio.}
    \label{fig:image-examples-perturbation}
\end{figure}

\newpage
\appendix
\onecolumn

\end{document}